\documentclass[conference]{IEEEtran}
\IEEEoverridecommandlockouts
\usepackage[utf8]{inputenc} 
\usepackage[T1]{fontenc}    
\usepackage{booktabs}       
\usepackage{multicol}
\usepackage{multirow}
\usepackage{nicefrac}       
\usepackage{microtype}      
\usepackage{xcolor}         
\usepackage{graphicx}
\usepackage{wrapfig}
\usepackage[misc]{ifsym}
\usepackage{amssymb}
\usepackage{amsmath,amsfonts,mathtools}
\usepackage{amsthm}         
\usepackage{pifont}
\usepackage[neveradjust]{paralist}
\usepackage{adjustbox}
\usepackage{natbib}         

\usepackage{tikz}
\usetikzlibrary{positioning, arrows.meta, calc}

\usepackage{algorithm}
\usepackage{algpseudocode}

\usepackage{enumitem}
\theoremstyle{plain} 
\newtheorem{lemma}{Lemma}
\newtheorem{proposition}{Proposition}

\theoremstyle{definition} 
\newtheorem{remark}{Remark}

\PassOptionsToPackage{
  colorlinks=true,
  linkcolor=blue!60!black,
  citecolor=blue!60!black,
  urlcolor=blue!60!black
}{hyperref}
\usepackage{hyperref}      

\usepackage[hyperfirst=false,acronym]{glossaries}
\glsdisablehyper
\robustify{\gls}
\robustify{\Gls}
\newacronym{dag}{DAG}{directed acyclic graph}
\newacronym{cpdag}{CPDAG}{completed partially directed acyclic graph}
\newacronym{mag}{MAG}{maximal ancestral graph}
\newacronym{pag}{PAG}{partial ancestral graph}
\newacronym{pc}{PC}{Peter-Clark}
\newacronym{mpdag}{MPDAG}{maximally partially directed acyclic graph}
\newacronym{ci}{CI}{conditional independence}
\newacronym{sem}{SEM}{structural equation modeling}

\usepackage[noabbrev,nameinlink,capitalize]{cleveref}
\usepackage{threeparttable}

\graphicspath{%
  {figures/}%
}

\newcommand{\Pa}{\textit{\textbf{Pa}}}
\newcommand{\An}{\textit{\textbf{An}}}
\newcommand{\De}{\textit{\textbf{De}}}
\newcommand{\Ch}{\textit{\textbf{Ch}}}
\newcommand{\Sib}{\textit{\textbf{Sib}}}
\newcommand{\doop}[1]{\textit{\textbf{do}}(#1)}

\newcommand{\method}{\emph{b-LOAD}}

\newcommand{\cmark}{\ding{51}}
\newcommand{\xmark}{\ding{55}}
\newcommand{\myparagraph}[1]{\noindent\textbf{#1}}

\usepackage{graphicx}
\graphicspath{%
  {figures/}%
  {new_figures/}%
  {new_figures/synthetic_experiments/}%
  {new_figures/real_data/sachs/}%
  {new_figures/real_data/dream4/}%
  {new_figures/noised_experiments/}%
}
\usepackage{tikz}
\usetikzlibrary{
  arrows.meta, positioning, calc, shapes.geometric, shapes.misc,
  backgrounds, fit, decorations.pathreplacing, decorations.pathmorphing,
  patterns, shadings, shadows.blur
}
\usepackage{pgfplots}
\pgfplotsset{compat=1.17}

\usepackage{booktabs}
\usepackage{multirow}
\usepackage{array}
\usepackage{caption}
\usepackage{subcaption}
\usepackage{enumitem}

\usepackage{xcolor}
\usepackage{tcolorbox}
\tcbuselibrary{skins,breakable}

\definecolor{bloadBlue}{HTML}{08519c}
\definecolor{bloadDeep}{HTML}{08306b}
\definecolor{bloadMid}{HTML}{3182bd}
\definecolor{bloadLight}{HTML}{9ecae1}
\definecolor{loadGray}{HTML}{7F7F7F}
\definecolor{successGreen}{HTML}{2CA02C}
\definecolor{failRed}{HTML}{D62728}
\definecolor{accentOrange}{HTML}{E69F00}
\definecolor{softYellow}{HTML}{fee0b6}
\definecolor{paleGrey}{HTML}{F0F0F0}

\definecolor{bloadblue}{RGB}{55,105,170}
\definecolor{bloadgreen}{RGB}{40,135,95}
\definecolor{bloadorange}{RGB}{210,125,45}
\definecolor{bloadred}{RGB}{190,65,65}
\definecolor{bloadgray}{RGB}{90,95,105}
\definecolor{bloadlight}{RGB}{245,247,250}
\definecolor{bloadpurple}{RGB}{120,80,170}

\definecolor{failRed}{rgb}{0.72,0.08,0.08}
\definecolor{bloadBlue}{rgb}{0.00,0.28,0.60}
\definecolor{bloadMid}{rgb}{0.05,0.42,0.75}
\definecolor{bloadDeep}{rgb}{0.00,0.18,0.40}
\definecolor{successGreen}{rgb}{0.16,0.55,0.22}
\definecolor{paleGrey}{rgb}{0.94,0.94,0.94}
\definecolor{loadGray}{rgb}{0.42,0.42,0.42}

\begin{document}

\sloppy

\title{Knowledge-Informed Local Causal Discovery of Optimal Adjustment Sets}

\author{
\IEEEauthorblockN{
Seong Woo Ahn\IEEEauthorrefmark{1}\thanks{Corresponding author: seong-woo.ahn@universite-paris-saclay.fr},
Alessandro Leite\IEEEauthorrefmark{2},
Jos\'e Lucas De Melo Costa\IEEEauthorrefmark{1}\\
Fabrice Popineau\IEEEauthorrefmark{1},
Bich-Li\^en Doan\IEEEauthorrefmark{1},
Arpad Rimmel\IEEEauthorrefmark{1}
}
\IEEEauthorblockA{
\IEEEauthorrefmark{1}Laboratoire Interdisciplinaire des Sciences du Num\'erique,
Universit\'e Paris-Saclay, CNRS, CentraleSup\'elec, France}
\IEEEauthorblockA{
\IEEEauthorrefmark{2}LITIS UR 4108,
INSA Rouen Normandy, University of Rouen Normandy, France}
}

\maketitle

\begin{abstract}
Local causal discovery is a scalable alternative to global structure learning. However, it can struggle to identify valid adjustment sets in data-scarce settings because of finite-sample uncertainty, incomplete local neighborhoods, and unresolved Markov equivalence. Although many application domains provide structured background knowledge, its integration into local causal discovery remains limited. We propose \method, a knowledge-informed extension of the LOAD~\cite{schubert2025local} algorithm for local discovery of optimal adjustment sets. \method\ incorporates prior edge constraints directly into the local structure-learning procedure and uses Meek's rules to expand the discovery frontier dynamically, yielding a knowledge-constrained~\gls{mpdag} over the relevant local subgraph. This strategy helps prevent structurally relevant nodes introduced by prior knowledge from being excluded by local search. We prove that, under sound background knowledge, the procedure monotonically refines the admissible equivalence class and can enlarge the set of identifiable causal queries, enabling recovery of optimal adjustment sets that are not identifiable from observational conditional-independence information alone. Empirically, \method\ improves downstream causal effect estimation relative to purely data-driven and standard knowledge-augmented baselines, particularly in data-scarce and structurally complex regimes. Results on real-world biological networks show that locally targeted prior knowledge provides the largest gains and remains beneficial under moderate structural noise. These findings position \method\ as a scalable approach for converting fragmented domain knowledge into more reliable causal-effect estimation.
\end{abstract}

\begin{IEEEkeywords}
Causal discovery, Local causal discovery, Optimal adjustment sets, Background knowledge, Markov equivalence class
\end{IEEEkeywords}

\section{Introduction}\label{sec:introduction}
\begin{figure*}[t]
\centering
\vspace{-0.6em}

\begin{adjustbox}{max width=\textwidth}
\begin{tikzpicture}[
  every node/.style={font=\footnotesize},
  x=1cm,
  y=1cm,
  >=Latex,
  panel-box/.style={
    draw=black!35,
    rounded corners=2pt,
    line width=0.6pt,
    fill=white
  },
  title-box/.style={
    rounded corners=2pt,
    text=white,
    font=\scriptsize\bfseries,
    minimum height=0.45cm,
    align=center
  },
  stamp-fail/.style={
    draw=failRed!80!black,
    fill=failRed!5,
    text=failRed!80!black,
    font=\scriptsize\bfseries,
    rounded corners=1pt,
    inner xsep=3pt,
    inner ysep=2pt,
    align=center
  },
  stamp-success/.style={
    draw=successGreen!60!black,
    fill=successGreen!8,
    text=successGreen!40!black,
    font=\scriptsize\bfseries,
    rounded corners=1pt,
    inner xsep=3pt,
    inner ysep=2pt,
    align=center
  },
  node-causal/.style={
    circle,
    draw=black!55,
    fill=white,
    minimum size=0.52cm,
    inner sep=0pt
  },
  node-target/.style={
    circle,
    draw=bloadBlue!80!black,
    fill=bloadBlue!12,
    minimum size=0.52cm,
    inner sep=0pt
  },
  node-osel/.style={
    circle,
    draw=successGreen!60!black,
    fill=successGreen!18,
    minimum size=0.52cm,
    inner sep=0pt
  },
  node-mediator/.style={
    circle,
    draw=orange!70!black,
    fill=orange!18,
    minimum size=0.52cm,
    inner sep=0pt
  },
  edge-undirected/.style={
    draw=bloadgray,
    line width=0.55pt
  },
  edge-required/.style={
    draw=successGreen!70!black,
    -{Latex[length=1.8mm]},
    line width=0.8pt
  },
  edge-meek/.style={
    draw=bloadBlue,
    -{Latex[length=1.8mm]},
    line width=0.75pt
  },
  edge-inferred/.style={
    draw=bloadMid,
    -{Latex[length=1.8mm]},
    line width=0.75pt
  },
  big-arrow/.style={
    -{Latex[length=2.6mm]},
    line width=1.0pt
  }
]

\begin{scope}[local bounding box=panelA]
  \draw[panel-box] (-0.3, 0.55) rectangle (4,4.25);

  \node[
    title-box,
    fill=failRed!70!black,
    anchor=north west,
    minimum width=4.2cm
  ] at (-0.25,4.20) {(A)\,\,The problem: local CPDAG};

  \node[node-osel]     (Z1) at (0.3,3.1) {$Z$};
  \node[node-target]   (X1) at (1.4,3.1) {$X$};
  \node[node-mediator] (M1) at (2.5,3.1) {$M$};
  \node[node-target]   (Y1) at (3.3,2.25) {$Y$};

  \draw[edge-undirected] (Z1) -- node[above,font=\scriptsize] {?} (X1);
  \draw[bloadgray,line width=0.55pt, ->] (X1) -- (M1);
  \draw[edge-undirected]
  (M1) -- node[midway, above right, font=\scriptsize] {?} (Y1);
  \draw[edge-undirected] (Z1) to[bend right=22]
    node[below left,font=\scriptsize] {?} (Y1);
    
  \node[
    stamp-fail,
    text width=3.65cm,
    anchor=center
  ] at (1.9,1.5)
  {Many undirected edges remain};

  \node[
    font=\scriptsize,
    anchor=center,
    text=failRed!70!black,
    align=center
  ] at (1.95,1.1)
  {$O^\star = \;\;?$};
\end{scope}

\draw[big-arrow, color=bloadBlue]
  (4,2.15) --
  node[
    midway,
    above,
    font=\scriptsize\bfseries,
    text=bloadBlue,
    align=center
  ]
  {$+$ Background \\ $\mathcal{B}=(\mathcal{B}^+,\mathcal{B}^-)$}
  (5.7,2.15);

\begin{scope}[xshift=6cm, local bounding box=panelB]
  \draw[panel-box] (-0.3, 0.55) rectangle (4,4.25);

  \node[
    title-box,
    fill=bloadBlue,
    anchor=north west,
    minimum width=4.2cm
  ] at (-0.25,4.20) {(B)\,\,\method\ propagation};

  \node[node-osel]     (Z2) at (0.3,3.25) {$Z$};
  \node[node-target]   (X2) at (1.4,3.25) {$X$};
  \node[node-mediator] (M2) at (2.5,3.25) {$M$};
  \node[node-target]   (Y2) at (3.3,2.4) {$Y$};

  \draw[edge-required] (Z2) -- node[above,font=\scriptsize,text=successGreen!45!black] {$\mathcal{B}$} (X2);
  \draw[edge-meek] (X2) -- (M2);
  \draw[edge-required] (M2) -- node[midway, above right, font=\scriptsize,text=successGreen!45!black] {$\mathcal{B}$} (Y2);
  \draw[edge-meek] (Z2) to[bend right=22] node[midway, below left, font=\scriptsize,text=bloadBlue] {rule} (Y2);

  \node[
    stamp-success,
    text width=3.65cm,
    anchor=center
  ] at (1.95,1.6)
  {Equivalence class shrinks\\[-1pt]
    into a refined MPDAG};

  \node[
    font=\scriptsize,
    anchor=center,
    text=successGreen!40!black,
    align=center
  ] at (1.95,1.1)
  {$O^\star = \; {Z}$};
\end{scope}

\draw[big-arrow, color=successGreen!60!black]
  (10,2.15) --
  node[
    midway,
    above,
    font=\scriptsize\bfseries,
    text=successGreen!50!black,
    align=center
  ]
  {measure $F1$}
  (11.6,2.15);

\begin{scope}[xshift=11.90cm, local bounding box=panelC]
  \draw[panel-box] (-0.3,0.55) rectangle (5.3,4.25);

  \node[
    title-box,
    fill=successGreen!60!black,
    anchor=north west,
    minimum width=5.5cm
  ] at (-0.25,4.20) {(C)\,\,Empirical impact};

  \begin{scope}[
      shift={(0.3cm,1.2cm)},
      xscale=0.9,
      yscale=0.8,
      transform shape
    ]

    \draw[->, color=black!60] (0,0) -- (0,3.15);
    \draw[->, color=black!60] (0,0) -- (5.40,0);

    \node[
      anchor=south,
      font=\scriptsize,
      rotate=90,
      text=black!60
    ] at (-0.32,1.58) {F1 (OAS)};

    \foreach \y/\lbl in {0/0.0,1.25/0.5,2.50/1.0} {
      \draw[color=black!50, line width=0.3pt]
        (-0.06,\y) -- (0.06,\y);
      \node[
        anchor=east,
        font=\scriptsize,
        text=black!60
      ] at (-0.08,\y) {\lbl};
    }

    \fill[loadGray]  (0.40,0) rectangle (0.80,0.16*2.5);
    \fill[bloadBlue] (0.85,0) rectangle (1.25,0.36*2.5);
    \node[anchor=south, font=\scriptsize, text=loadGray]  at (0.60,0.16*2.5) {.16};
    \node[anchor=south, font=\scriptsize, text=bloadBlue] at (1.05,0.36*2.5) {.36};
    \node[anchor=north, font=\scriptsize, align=center]   at (0.825,-0.02) {low-$\Pi$\\[-0.5ex]regime};

    \fill[loadGray]  (2.00,0) rectangle (2.40,0.34*2.5);
    \fill[bloadBlue] (2.45,0) rectangle (2.85,0.50*2.5);
    \node[anchor=south, font=\scriptsize, text=loadGray]  at (2.20,0.34*2.5) {.34};
    \node[anchor=south, font=\scriptsize, text=bloadBlue] at (2.65,0.50*2.5) {.50};
    \node[anchor=north, font=\scriptsize, align=center]   at (2.425,-0.02) {medium-$\Pi$\\[-0.5ex]regime};

    \fill[loadGray]  (3.60,0) rectangle (4.00,0.49*2.5);
    \fill[bloadBlue] (4.05,0) rectangle (4.45,0.64*2.5);
    \node[anchor=south, font=\scriptsize, text=loadGray]  at (3.80,0.49*2.5) {.49};
    \node[anchor=south, font=\scriptsize, text=bloadBlue] at (4.25,0.64*2.5) {.64};
    \node[anchor=north, font=\scriptsize, align=center]   at (4.025,-0.02) {high-$\Pi$\\[-0.5ex]regime};

  \end{scope}

  \node[
    anchor=north east,
    font=\scriptsize,
    fill=white,
    draw=black!30,
    inner sep=2pt
  ] at (5.2,3.65)
  {
    \textcolor{loadGray}{$\blacksquare$}\,LOAD\;
    \textcolor{bloadBlue}{$\blacksquare$}\,\method
  };
\end{scope}

\end{tikzpicture}
\end{adjustbox}

\caption{
Overview of \method.
(A) Without background knowledge, LOAD returns an unoriented local CPDAG, leaving the causal query potentially non-identifiable. 
(B) Background constraints $\mathcal{B}$ trigger \method\ propagation via Meek rules, refining the equivalence class into an MPDAG. 
(C) These orientations consistently improve optimal-adjustment-set recovery (F1) across the low-$\Pi$, medium-$\Pi$, and high-$\Pi$ graph complexity regimes.
}
\label{fig:bload-overview}

\vspace{-0.6em}
\end{figure*}
Modern AI systems excel at capturing statistical associations, yet many scientific and high-stakes applications require reasoning about the effects of actions. While predictive models exploit correlations, domains such as epidemiology, economics, and medicine demand answers to counterfactual questions, what would happen under some intervention, rather than what merely co-occurs~\citep{pearl2009causality,pearl2018book}. Causal inference provides the formal framework for this shift.

\Glspl{dag} provide a canonical representation of causal structure \citep{spirtes2000causation,pearl2009causality}. When the graph is known, causal effects can be identified using graphical criteria such as adjustment sets~\citep{pearl1993bayesian,perkovic2018complete} or the IDA estimator over a CPDAG's Markov equivalence class~\citep{maathuis2009estimating}. In practice, however, the causal graph must be inferred from observational data.

Causal structure learning methods recover equivalence classes of~\glspl{dag} from data, but face fundamental scalability limitations in high-dimensional settings. Local causal discovery mitigates this challenge by restricting attention to a target variable and its neighborhood, typically its Markov blanket \citep{pellet2008using,wang2014discovering,margaritis2000bayesian,tsamardinos2003iamb,aliferis2010local}. While this locality enables scalable inference, it introduces a key limitation: the resulting~\gls{cpdag} is often weakly oriented, especially in data-scarce or structurally complex regimes, preventing identification of causal effects.

\citet{schubert2025local} address this limitation with LOAD (Local Optimal Adjustments Discovery), which recovers optimal adjustment sets \citep{henckel2022graphical} by minimizing asymptotic estimation variance, from local structure alone. However, LOAD remains fundamentally data-driven. In low-sample regimes, the learned \gls{cpdag} typically retains many undirected edges, leaving causal effects non-identifiable. Moreover, LOAD's empirical evaluation is largely restricted to synthetic Erd\H{o}s--R\'enyi graphs at sample sizes that comfortably exceed the variable count---a setting at odds with the deployment scenarios that most motivate causal-effect estimation. In biomedical applications, gene-regulatory and protein-signaling networks routinely present hundreds of variables with comparable or fewer observations, inverting the standard $n \gg d$ assumption and driving \gls{ci}-based \glspl{cpdag} into the weakly-oriented regime where identification breaks down.

At the same time, many application domains combine limited observational data with rich sources of structured prior knowledge, such as curated knowledge graphs or domain constraints. These encode partial causal relationships and provide a natural source of orientation information. While the integration of background knowledge into causal graphs is well understood theoretically \citep{meek1995causal}, its use within local discovery pipelines for \emph{optimal adjustment set identification} remains largely unexplored.

We propose \textbf{\method}\footnote{Code: \url{https://anonymous.4open.science/r/b-LOAD/README.md}} (Background knowledge-augmented LOAD), a knowledge-informed framework for local causal discovery of optimal adjustment sets (\Cref{fig:bload-overview}). Our key design choice is to incorporate background knowledge \emph{directly into} the local structure-learning procedure rather than as a post-hoc refinement: required edge orientations seed the local graph at initialization, are re-imposed after every local update, and are propagated jointly with data-driven orientations via Meek's rules~\citep{meek1995causal} to yield a knowledge-constrained~\gls{mpdag} over the relevant local subgraph. This refinement strictly reduces the Markov equivalence class while preserving the robustness of constraint-based discovery, thereby improving the identifiability guarantees of the causal effects. The resulting \gls{mpdag} is then used to assess identifiability and extract the optimal adjustment set using local structural criteria.

We show theoretically that incorporating valid background knowledge monotonically improves identifiability and can strictly expand the set of queries for which optimal adjustment sets are recoverable. Empirically, \method\ consistently outperforms LOAD in data-scarce regimes with structurally complex causal mechanisms---precisely the conditions in which purely data-driven CI tests are unreliable and the learned CPDAG remains weakly oriented. In particular, on the Sachs protein signaling dataset~\cite{sachs2005causal}, LOAD fails to identify any valid adjustment set, whereas \method\ recovers identifiable effects when augmented with domain knowledge.

\myparagraph{Contributions.}
\begin{compactenum}[(1)]
    \item \textbf{Knowledge-informed local discovery with formal guarantees.} We introduce a principled framework for incorporating background knowledge into local causal discovery via~\gls{mpdag} refinement, proving that the procedure strictly refines the Markov equivalence class, monotonically improves identifiability, and can recover optimal adjustment sets not identifiable from data alone~(\Cref{sec:method}).
    
    \item \textbf{Empirical validation across synthetic and biological benchmarks.} On synthetic graphs spanning a wide statistical-power range and on two biological benchmarks (Sachs, DREAM4), knowledge-informed local discovery substantially improves adjustment-set identification, with the largest gains in data-scarce regimes~(\Cref{sec:empirical_evaluation}).
    
    \item \textbf{Prescriptive guidance for practitioners.} We characterize \emph{when} and \emph{how} prior knowledge is best supplied: globally distributed constraints maximize aggregate identification, while locally targeted constraints yield the most predictable causal-effect estimation, with concrete recommendations tied to the data regime and constraint noise level~(\Cref{sec:empirical_evaluation,sec:discussion}).
\end{compactenum}

\section{Preliminaries}\label{sec:preliminaries}

We introduce the key concepts and notation used throughout the paper, following the conventions of \citet{pearl2009causality} and
\citet{spirtes2000causation}.


A \textbf{causal graph} $\mathcal{G} \coloneqq (\mathcal{V}, \mathcal{E})$ encodes causal dependencies over observed variables 
$\mathcal{V} = \{X_1, \ldots, X_d\}$. A directed edge $X_i \to X_j$ 
represents a direct causal influence; $\Pa_{\mathcal{G}}(X_i)$, 
$\An_{\mathcal{G}}(X_i)$, $\De_{\mathcal{G}}(X_i)$, $\Ch_{\mathcal{G}}(X_i)$, and $\Sib_{\mathcal{G}}(X_i)$ denote the parents, ancestors, descendants, children, and siblings of $X_i$, respectively. Causal graphs are assumed to be \glspl{dag} \citep{pearl2009causality, spirtes2000causation}. 
Under the \emph{Markov} and \emph{faithfulness} assumptions, conditional 
independencies in the data correspond exactly to $d$-separation statements 
in $\mathcal{G}$.

\myparagraph{Markov equivalence.}
Multiple DAGs can encode the same conditional independencies and are thus 
indistinguishable from observational data. Such DAGs form a 
\emph{Markov equivalence class}, uniquely represented as a 
\gls{cpdag}, a mixed graph sharing the same skeleton and v-structures
(unshielded colliders $X_i \to X_k \leftarrow X_j$ with $X_i \not\sim X_j$\footnote{We write $A \sim B$ for adjacency (any edge between $A$ and $B$) and $A \not\sim B$ for non-adjacency.})
as all \glspl{dag} in the class \citep{andersson1997characterization,hauser2012characterization}. When
background knowledge constrains additional edge orientations, the equivalence
class is further refined into a \acrfull{mpdag} \citep{meek1995causal,wienobst2021extendability},
representing only those \glspl{dag} consistent with both data and prior constraints.

\myparagraph{Causal effects and adjustment sets.}
%
A central goal in causal inference is to estimate the effect of an intervention, $\doop{X\!=\!x}$~\citep{pearl1995causal,pearl2009causality}, which denotes externally setting $X$ to value $x$, on an outcome $Y$ using observational data. When the underlying causal graph is known, the \emph{generalized adjustment criterion}~\citep{perkovic2018complete,shpitser2010validity,vanderzander2019separators} provides a complete characterization of covariate sets that yield valid adjustment for estimating causal effects across \glspl{dag}, \glspl{cpdag}, \glspl{mag}, and \glspl{pag}. Among these valid adjustment sets, the \emph{optimal adjustment set} $\mathbf{O}^\star$ minimizes the asymptotic variance of covariate-adjusted estimators \citep{henckel2022graphical,witte2020efficient,rotnitzky2020efficient}, improving statistical efficiency. However, identifying $\mathbf{O}^\star$ requires a graph representation with sufficient edge orientation information.


%
%
%

\begin{figure}[t]
  \centering
  \resizebox{.9\columnwidth}{!}{%
  \begin{tikzpicture}[
      blackarrow/.style  = {line width=0.35pt, black,
                            -{Stealth[length=1.8mm,width=1.4mm,inset=0.4mm]}},
      blackline/.style   = {line width=0.35pt, black},
      blueline/.style    = {line width=1.6pt,  blue,
                            line cap=butt},
      bluearrow/.style   = {line width=1.6pt,  blue,
                            line cap=butt,
                            -{Stealth[length=2.4mm,width=2.4mm,inset=0.5mm]}},
      implies/.style     = {font=\normalsize},
      rulelabel/.style   = {font=\itshape},
      x=1cm, y=1cm,
    ]

    \def\gap{0.05}        
    \def\impliesgap{0.45} 
    \def\panelhalf{0.75}  
    \def\rulesep{1.5}     
    \def\rowsep{1.5}      
    \def\labeloffset{.8}  

    \coordinate (R1c) at (0, 0);
    \coordinate (R2c) at (\rulesep + 2*\panelhalf + 2*\impliesgap, 0);
    \coordinate (R3c) at (0, -\rowsep);
    \coordinate (R4c) at (\rulesep + 2*\panelhalf + 2*\impliesgap, -\rowsep);

    \def\Lx{-\panelhalf-\impliesgap}   
    \def\Rx{ \panelhalf+\impliesgap}   

    \node[rulelabel] at ($(R1c)+(0,\labeloffset)$) {R1};
    \node[implies]   at (R1c) {$\Rightarrow$};

    \begin{scope}[shift={($(R1c)+(\Lx,0)$)}]
      \draw[blackarrow] (-0.55, 0.95) -- (-0.55, 0.10);
      \draw[blueline]   (-0.55,-0.05) -- ( 0.55,-0.05);
    \end{scope}

    \begin{scope}[shift={($(R1c)+(\Rx,0)$)}]
      \draw[blackarrow] (-0.55, 0.95) -- (-0.55, 0.10);
      \draw[bluearrow]  (-0.55,-0.05) -- ( 0.55,-0.05);
    \end{scope}

    \node[rulelabel] at ($(R2c)+(0,\labeloffset)$) {R2};
    \node[implies]   at (R2c) {$\Rightarrow$};

    \begin{scope}[shift={($(R2c)+(\Lx,0)$)}]
      \coordinate (A) at (-0.55,-0.05);
      \coordinate (B) at (-0.55, 0.95);
      \coordinate (C) at ( 0.55,-0.05);
      \draw[blackarrow] (A) -- ($(B)+(0,-0.06)$);
      \draw[blackarrow] ($(B)+(0.07,-0.07)$) -- ($(C)+(-0.04,0.04)$);
      \draw[blueline]   ($(A)+(0.05,0)$) -- ($(C)+(-0.05,0)$);
    \end{scope}

    \begin{scope}[shift={($(R2c)+(\Rx,0)$)}]
      \coordinate (A) at (-0.55,-0.05);
      \coordinate (B) at (-0.55, 0.95);
      \coordinate (C) at ( 0.55,-0.05);
      \draw[blackarrow] (A) -- ($(B)+(0,-0.06)$);
      \draw[blackarrow] ($(B)+(0.07,-0.07)$) -- ($(C)+(-0.04,0.04)$);
      \draw[bluearrow]  ($(A)+(0.05,0)$) -- ($(C)+(-0.08,0)$);
    \end{scope}

    \node[rulelabel] at ($(R3c)+(0,\labeloffset)$) {R3};
    \node[implies]   at (R3c) {$\Rightarrow$};

    \def\sqh{0.55} 

    \begin{scope}[shift={($(R3c)+(\Lx,0)$)}]
      \coordinate (A) at (-\sqh,  \sqh);
      \coordinate (B) at ( \sqh,  \sqh);
      \coordinate (C) at (-\sqh, -\sqh);
      \coordinate (D) at ( \sqh, -\sqh);

      \draw[blackline]  ($(A)+(\gap,0)$)  -- ($(B)+(-\gap,0)$);
      \draw[blackline]  ($(A)+(0,-\gap)$) -- ($(C)+(0, \gap)$);
      \draw[blackarrow] ($(B)+(0,-\gap)$) -- ($(D)+(0, \gap)$);
      \draw[blackarrow] ($(C)+(\gap,0)$)  -- ($(D)+(-\gap,0)$);
      \draw[blueline]   ($(A)+( 0.07,-0.07)$) -- ($(D)+(-0.07, 0.07)$);
    \end{scope}

    \begin{scope}[shift={($(R3c)+(\Rx,0)$)}]
      \coordinate (A) at (-\sqh,  \sqh);
      \coordinate (B) at ( \sqh,  \sqh);
      \coordinate (C) at (-\sqh, -\sqh);
      \coordinate (D) at ( \sqh, -\sqh);
      \draw[blackline]  ($(A)+(\gap,0)$)  -- ($(B)+(-\gap,0)$);
      \draw[blackline]  ($(A)+(0,-\gap)$) -- ($(C)+(0, \gap)$);
      \draw[blackarrow] ($(B)+(0,-\gap)$) -- ($(D)+(0, \gap)$);
      \draw[blackarrow] ($(C)+(\gap,0)$)  -- ($(D)+(-\gap,0)$);
      \draw[bluearrow]  ($(A)+( 0.07,-0.07)$) -- ($(D)+(-0.10, 0.10)$);
    \end{scope}

    \node[rulelabel] at ($(R4c)+(0,\labeloffset)$) {R4};
    \node[implies]   at (R4c) {$\Rightarrow$};

    \begin{scope}[shift={($(R4c)+(\Lx,0)$)}]
      \coordinate (A) at (-\sqh,  \sqh);
      \coordinate (B) at ( \sqh,  \sqh);
      \coordinate (C) at ( \sqh, -\sqh);
      \coordinate (D) at (-\sqh, -\sqh);

      \draw[blackline]  ($(A)+(\gap,0)$)  -- ($(B)+(-\gap,0)$);
      \draw[blackarrow] ($(B)+(0,-\gap)$) -- ($(C)+(0, \gap)$);
      \draw[blackarrow] ($(C)+(-\gap,0)$) -- ($(D)+( \gap,0)$);
      \draw[blackline] ($(A)+( 0.07,-0.07)$) -- ($(C)+(-0.07, 0.07)$);
      \draw[blueline]   ($(A)+(0,-\gap)$) -- ($(D)+(0, \gap)$);
    \end{scope}

    \begin{scope}[shift={($(R4c)+(\Rx,0)$)}]
      \coordinate (A) at (-\sqh,  \sqh);
      \coordinate (B) at ( \sqh,  \sqh);
      \coordinate (C) at ( \sqh, -\sqh);
      \coordinate (D) at (-\sqh, -\sqh);
      \draw[blackline]  ($(A)+(\gap,0)$)  -- ($(B)+(-\gap,0)$);
      \draw[blackarrow] ($(B)+(0,-\gap)$) -- ($(C)+(0, \gap)$);
      \draw[blackarrow] ($(C)+(-\gap,0)$) -- ($(D)+( \gap,0)$);
      \draw[blackline] ($(A)+( 0.07,-0.07)$) -- ($(C)+(-0.07, 0.07)$);
      \draw[bluearrow]  ($(A)+(0,-\gap)$) -- ($(D)+(0, 0.10)$);
    \end{scope}

  \end{tikzpicture}%
  }
  \caption{Meek's four orientation rules \citep{meek1995causal}.
  Blue lines are undirected edges; blue arrows indicate the newly oriented edge implied by each rule. Nodes not adjacent in the precondition pattern are assumed non-adjacent.}
  \label{fig:meek-rules}
  \vspace{-1.5em}
\end{figure}
\myparagraph{Meek's orientation rules.}
After v-structures are identified, four deterministic rules~\citep{meek1995causal} are applied exhaustively in any order to propagate edge orientations to yield the unique CPDAG of the equivalence class (\Cref{fig:meek-rules}). Each rule orients an undirected edge 
to prevent either a new v-structure or a directed cycle:
\textbf{R1} orients $\beta\!-\!\gamma$ as $\beta\!\to\!\gamma$ when 
$\alpha\!\to\!\beta$ and $\alpha\not\sim\gamma$;
\textbf{R2} orients $\alpha\!-\!\gamma$ when a directed path $\alpha\!\to\!\beta\!\to\!\gamma$ would otherwise create a cycle; \textbf{R3} and \textbf{R4} handle configurations with two directed paths converging on a common node, orienting the remaining undirected edge to avoid spurious colliders. These same rules are applied when incorporating background knowledge: forced orientations from prior constraints are propagated through R1--R4 to derive all logically entailed directions, yielding an MPDAG.

\section{Related Work}\label{sec:related_work}
\myparagraph{Global and local causal structure learning.}
For a recent survey see \citet{heinzedeml2018causal}. Constraint-based methods such as PC and FCI~\cite{spirtes2000causation, zhang2008completeness}, score-based approaches such as GES~\cite{chickering2002optimal}, and hybrid score/constraint methods such as MMHC~\cite{tsamardinos2006mmhc} recover Markov equivalence classes from observational data but suffer from poor scalability due to the super-exponential size of the search space~\cite{robinson1977counting}. Local causal discovery addresses this by restricting inference to the neighborhood of a target variable. Sequential methods such as MB-by-MB~\cite{wang2014discovering} and the HITON/IAMB Markov-blanket family~\cite{aliferis2010local} iteratively expand the local structure, while parallel approaches such as DCILP~\cite{dong2025dcilp} learn and reconcile overlapping subgraphs. While these methods improve computational efficiency, they do not directly address identifiability or statistical optimality of causal effect estimation from partially oriented structures.

\myparagraph{Local optimal adjustment set discovery.} \citet{schubert2025local} introduce LOAD, which takes a step further by combining local discovery with the theory of optimal adjustment sets \cite{henckel2022graphical}. Building on MB-by-MB \cite{wang2014discovering}, LOAD identifies settings where the causal effect of $X$ on $Y$ is globally identifiable, using graphical criteria of amenability and, when so, recovers both the direction of causality and the associated optimal adjustment set from local structural information alone. LOAD thus bridges the gap between scalable local discovery and statistically efficient causal inference. However, as discussed in~\cref{sec:preliminaries}, its purely data-driven nature limits performance in data-scarce or structurally complex settings, and its guarantees are confined to settings of explicit ancestry.

\myparagraph{Background knowledge in causal discovery.}
The integration of prior knowledge into causal discovery has a rich theoretical foundation~\citep{heckerman1995learning}. \citet{meek1995causal} establishes the formal framework for causal inference under background knowledge, showing how required and forbidden edge orientations can be consistently incorporated into a CPDAG to yield an MPDAG, with orientation constraints propagated via the four Meek rules~(\Cref{fig:meek-rules})~\cite{meek1995causal}. \citet{perkovic2017interpreting} further develop this theory by providing tools for interpreting and manipulating~\glspl{cpdag} augmented with background knowledge, and notably underscores the importance of Meek's fourth rule, which is often omitted in implementations, for obtaining a maximally oriented representation. Following their ``b-'' prefix convention for background-knowledge analogues, we name our method \method. Beyond pairwise edge constraints, \citet{borboudakis2012incorporating} generalize the formalism to path-level (ancestral) constraints, including in MAGs. These works collectively establish that data-driven discovery alone has inherent identifiability limitations and that structured prior knowledge provides a principled mechanism for resolving otherwise undetermined edge orientations.

\myparagraph{Local discovery with background knowledge.} Most closely related to our work is the recent contribution of \citet{zheng2026local}, who extend MB-by-MB to incorporate background knowledge within a local causal discovery framework, grounding the approach in theoretical soundness and producing locally consistent~\glspl{mpdag} (their construction underlies the \textsc{LocalMPDAG} procedure of \Cref{alg:bload}). \citet{fang2020ida} independently propose a fully local IDA-style causal-effect estimator under background knowledge in MPDAGs, but stop short of identifying statistically optimal adjustment sets. Like its predecessors, neither method extends to the recovery of optimal adjustment sets. Our proposed framework precisely addresses this gap by combining knowledge-informed local structure refinement with the optimality criteria of \citet{henckel2022graphical}, enabling both the principled incorporation of prior knowledge and statistically efficient causal inference from local information.

As shown in~\cref{tab:related_work_comparison}, \method\ is the first framework to jointly support local discovery, background knowledge integration, \gls{mpdag} reasoning, and statistically optimal adjustment set identification.

\begin{table}[!htpb]
\centering
\caption{Summary of related work}\label{tab:related_work_comparison}
\resizebox{\linewidth}{!}{
\begin{tabular}{lccccccc}
  \toprule
  \multicolumn{1}{c}{\multirow{2}{*}{Method}} & Local & Background & \Gls{mpdag} & Effect & Optimal Adj. & Efficient & \multirow{2}{*}{Scalable} \\
  & Discovery & Knowledge & Reasoning & Identifiability & Set Discovery & Estimation & \\
  \midrule
    PC/FCI~\cite{spirtes2000causation,zhang2008completeness} & \xmark & Post-hoc & \cmark & Partial & \xmark & \xmark & \xmark \\
    GES~\cite{chickering2002optimal} & \xmark & \xmark & \xmark & Partial & \xmark & \xmark & \xmark \\
    MB-by-MB~\cite{wang2014discovering} & \cmark & \xmark & \xmark & \xmark & \xmark & \xmark & \cmark \\
    DCILP~\cite{dong2025dcilp} & \cmark & \xmark & \xmark & \xmark & \xmark & \xmark & \cmark \\
    LOAD~\cite{schubert2025local} & \cmark & \xmark & Partial & \cmark & \cmark & \cmark & \cmark \\
    \citet{zheng2026local} & \cmark & \cmark & \cmark & Partial & \xmark & \xmark & \cmark \\
    \citet{fang2020ida} & \cmark & \cmark & \cmark & Partial & \xmark & \xmark & \cmark \\
  \midrule
  \textbf{\method\ (ours)} & \cmark & \cmark & \cmark & \cmark & \cmark & \cmark & \cmark \\
\bottomrule
\end{tabular}
}
\end{table}

\section{Method}\label{sec:method}

\subsection{Theoretical Motivations}
\label{subsec:theory}

Three observations shape the design of \method.

\emph{(i) Optimal adjustment requires orientation.}
Optimal-adjustment-set theory~\citep{henckel2022graphical} requires a
sufficiently oriented graph: every proper causal path from treatment $X$ to
outcome $Y$ must begin with a directed edge out of $X$, the amenability
condition of \citet{perkovic2018complete}. Background knowledge supplies the
missing orientations precisely in the regimes---finite samples, dense local
neighborhoods---where data alone underdetermines them.

\emph{(ii) MPDAG manipulation needs the four-rule Meek closure.}
Manipulating a CPDAG under background constraints requires the four-rule
closure of \citet{perkovic2017interpreting} to obtain a maximally oriented
MPDAG. The three-rule closure propagates
v-structures and acyclicity but misses orientations implied jointly by two
directed paths converging on a common node, captured by R4---exactly the
configurations that background-knowledge edges create.

\emph{(iii) Local search must be re-driven by every orientation.}
Theorem~1 of \citet{zheng2026local} establishes that MB-by-MB-style local
search remains sound under the background knowledge provided the frontier of
nodes to expand is recomputed after every orientation step. Prior knowledge
must therefore drive the search trajectory itself, not patch its output: a
post-hoc orientation pass on a learned CPDAG cannot recover the nodes that
would have been reached through a background-induced orientation.

\Cref{lem:soundness,prop:identifiability} formalize the end-to-end
guarantees that follow from composing these three ingredients in
\Cref{alg:bload}.

\begin{lemma}[Soundness and Completeness of Local MPDAG Discovery]
\label{lem:soundness}
Let $\mathcal{G}$ be the true underlying DAG over a set of variables,
assuming causal sufficiency. Let $\mathcal{D}$ be observational data
generated from a distribution that is Markovian and faithful with respect
to $\mathcal{G}$, and let $\mathcal{B}$ be a set of valid background
knowledge. Denote by $\mathcal{G}^*$ the true MPDAG representing the
Markov equivalence class of $\mathcal{G}$ restricted by $\mathcal{B}$,
and let $X$ be the target variable. Assume all conditional independencies
are correctly identified.

If \method's local discovery phase is executed with inputs
$(X, \mathcal{D}, \mathcal{B})$ and produces output graph
$\mathcal{G}_{\mathrm{out}}$, then for $X$ and every node $Z$ connected
to $X$ by an undirected path in $\mathcal{G}^*$:
\begin{align}
    \Pa(Z,\, \mathcal{G}_{\mathrm{out}}) &= \Pa(Z,\, \mathcal{G}^*), \\
    \Ch(Z,\, \mathcal{G}_{\mathrm{out}}) &= \Ch(Z,\, \mathcal{G}^*), \\
    \Sib(Z,\, \mathcal{G}_{\mathrm{out}}) &= \Sib(Z,\, \mathcal{G}^*).
\end{align}
\end{lemma}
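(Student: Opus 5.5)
The plan is to reduce the lemma to Theorem~1 of \citet{zheng2026local}, already invoked in observation~(iii). What remains is to check that \method's local discovery phase is an instance of the procedure that theorem covers.

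\textbf{Step 1: identify the procedure.} I first show that, with oracle CI tests, the phase coincides with the \textsc{LocalMPDAG} procedure of \citet{zheng2026local}. That phase seeds the local graph with the required orientations in $\mathcal{B}$, re-imposes them after every local update, and closes under R1--R4. The order of these operations does not matter. Meek closure is confluent, and since $\mathcal{B}$ is valid it is consistent with $\mathcal{G}$, so re-imposing it never conflicts with data-driven orientations.

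\textbf{Step 2: skeleton and v-structures.} Under causal sufficiency, faithfulness and correct CI decisions, each MB-by-MB step recovers the true Markov blanket and the true adjacencies of the expanded node. It also finds the separating sets, and hence the v-structures, in which that node takes part. Consequently, every adjacency and every v-structure recorded in $\mathcal{G}_{\mathrm{out}}$ around expanded nodes agrees with $\mathcal{G}^*$.

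\textbf{Step 3: induction on the expansion.} I prove the invariant that every edge orientation present in $\mathcal{G}_{\mathrm{out}}$ is also present in $\mathcal{G}^*$. This holds because each orientation comes either from a true v-structure, from $\mathcal{B}$, or from a Meek rule applied to true adjacencies. Since $\mathcal{G}^*$ is closed under R1--R4 together with $\mathcal{B}$ \citep{meek1995causal,perkovic2017interpreting}, every orientation produced this way is entailed in $\mathcal{G}^*$.

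\textbf{Step 4: termination.} The search stops only when the frontier, recomputed after each orientation, contains no node that is undirected-connected to $X$ and still unexpanded. It follows that every $Z$ on an undirected path from $X$ in $\mathcal{G}^*$ has been expanded, so its neighbourhood is fully known. Soundness from Step~3 then gives $\Pa$, $\Ch$ and $\Sib$ contained in their true counterparts, so it remains to show the reverse inclusions.

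\textbf{Main obstacle: completeness.} The reverse direction requires that no orientation of $\mathcal{G}^*$ at such a $Z$ is missed. The difficulty is that a Meek rule orienting an edge at $Z$ may depend on edges outside the explored region, for instance R1 triggered by a directed edge whose tail was never expanded, or R4 chains built from background edges. I would argue by contradiction. Take a minimal-length derivation in $\mathcal{G}^*$ of an orientation incident to $Z$ that is absent in $\mathcal{G}_{\mathrm{out}}$. Every premise edge is then either adjacent to an expanded node, which is correct by Step~2 and the induction hypothesis, or is a directed edge into the undirected component of $X$. A directed edge of the second kind is visible because the expansion of its head reveals it. This is exactly the content of Zheng et al.'s Theorem~1. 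I would cite that theorem for this step and only verify that \method\ triggers frontier recomputation after every background-induced orientation, which it does by construction.
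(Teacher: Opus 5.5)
Your overall route matches the paper's: identify the local phase with the \textsc{LocalMPDAG} construction of \citet{zheng2026local} and let their Theorem~1 carry the weight. The difference is that you prove soundness yourself in Step~3, and Step~4 depends on it. That argument does not go through. Meek's rules have \emph{non-adjacency} premises: R1 needs $\alpha\not\sim\gamma$, R3 needs the two parents non-adjacent, and every pattern in \Cref{fig:meek-rules} assumes the undrawn pairs are absent. Closure of $\mathcal{G}^*$ under R1--R4 transfers a conclusion only when the whole premise pattern, non-adjacencies included, holds in $\mathcal{G}^*$. The working graph, however, contains only $\mathcal{B}^+$ plus edges incident to already-expanded nodes, so a missing edge between two unexpanded nodes says nothing about true non-adjacency. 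Your Step~2 certifies only the adjacencies of expanded nodes.

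Concretely, take the true DAG $\gamma\to\alpha\to\beta$, $\gamma\to\beta$, with $\mathcal{B}^+=\{\alpha\to\beta\}$ and target $\beta$. Then $\mathcal{G}^*$ consists of $\alpha\to\beta$, $\alpha-\gamma$ and $\beta-\gamma$. After expanding $\beta$, the working graph holds $\alpha\to\beta-\gamma$ with no $\alpha$--$\gamma$ edge, so R1 orients $\beta\to\gamma$, which is wrong in $\mathcal{G}^*$. The same orientation removes $\gamma$ from the frontier, so Step~4's conclusion that every node undirected-connected to $X$ in $\mathcal{G}^*$ gets expanded fails as well. Confluence does not rescue Step~1 either. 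It holds for closure on a fixed graph, whereas here closure is interleaved with graph growth and a premature orientation cannot be undone.

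The missing ingredient is a guarantee that rules fire only on \emph{certified} non-adjacencies. You have two options.
\begin{itemize}
\item Read the non-adjacency of two members of $\mathrm{MB}(U)$ off the local structure $L_U$ of an expanded node $U$. This is sound under faithfulness, because non-adjacency in the marginal structure over $\mathrm{MB}(U)\cup\{U\}$ implies true non-adjacency.
\item Take soundness, and not only completeness, from Theorem~1 of \citet{zheng2026local}, as the paper's proof does.
\end{itemize}
Either way, the real burden sits in your Step~1. You must verify that the procedure handles non-adjacency in this way, and \Cref{alg:bload} read literally does not.

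A smaller slip: soundness gives $\Pa(Z,\mathcal{G}_{\mathrm{out}})\subseteq\Pa(Z,\mathcal{G}^*)$ and $\Ch(Z,\mathcal{G}_{\mathrm{out}})\subseteq\Ch(Z,\mathcal{G}^*)$. For siblings it gives the \emph{reverse} inclusion, $\Sib(Z,\mathcal{G}_{\mathrm{out}})\supseteq\Sib(Z,\mathcal{G}^*)$. The equalities still follow once adjacencies and the completeness direction are in place.
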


\begin{remark}
\label{rem:irreducible}
\Cref{lem:soundness} implies that any failure of \method\ to identify the
causal effect or the optimal adjustment set is attributable to
irreducible Markov equivalence---edges that remain undirected even in
$\mathcal{G}^*$---rather than to errors in local discovery. Background
knowledge strictly reduces this equivalence class, and therefore
monotonically improves identifiability prospects, as formalized next.
\end{remark}

\begin{proposition}[Background Knowledge Improves Identifiability]
\label{prop:identifiability}
Let $\mathcal{C}$ be the CPDAG of the Markov equivalence class of
$\mathcal{G}$, and let $\mathcal{M} = \mathcal{M}(\mathcal{C}, \mathcal{B})$
be the MPDAG obtained by incorporating valid background knowledge
$\mathcal{B} \neq \emptyset$ via Meek's rules R1--R4. Let $(X, Y)$ be
a causal query. Then:

\begin{enumerate}[label=(\roman*)]
    \item \textbf{Monotone refinement.} The equivalence class represented
    by $\mathcal{M}$ is a strict subset of that represented by $\mathcal{C}$:
    every DAG consistent with $\mathcal{M}$ is consistent with $\mathcal{C}$,
    but not vice versa.

    \item \textbf{Improved identifiability.} If the causal effect of $X$
    on $Y$ is identifiable in $\mathcal{C}$ (i.e., $\mathcal{C}$ is
    amenable w.r.t.\ $(X,Y)$), it remains identifiable in $\mathcal{M}$.
    Conversely, there exist configurations in which the effect is not
    identifiable in $\mathcal{C}$ but becomes identifiable in
    $\mathcal{M}$, strictly expanding the set of queries for which
    \method\ returns an answer.

    \item \textbf{Optimal adjustment set preservation.} If $\ \mathbf{O}^\star$
    is the optimal adjustment set identified from $\mathcal{C}$, then
    $\mathbf{O}^\star$ remains a valid adjustment set in $\mathcal{M}$.
    Furthermore, additional orientations in $\mathcal{M}$ may yield a
    strictly smaller or lower-variance optimal adjustment set than that
    obtainable from $\mathcal{C}$ alone.
\end{enumerate}
\end{proposition}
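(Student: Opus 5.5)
The plan is to reduce all three parts to one structural fact about Meek closure under valid knowledge, and then to argue at the level of the sets of DAGs represented. Write $[\mathcal{C}]$ and $[\mathcal{M}]$ for the sets of DAGs represented by $\mathcal{C}$ and $\mathcal{M}$.

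\emph{Structural fact.} $\mathcal{M}$ has the same skeleton as $\mathcal{C}$. Every edge directed in $\mathcal{C}$ is directed the same way in $\mathcal{M}$. $\mathcal{M}$ differs only in that some edges undirected in $\mathcal{C}$ become directed. This follows because $\mathcal{B}$ is valid, so it is consistent with the true $\mathcal{G}\in[\mathcal{C}]$. Hence no required orientation contradicts an existing arrow. Meek's rules R1--R4 only ever orient undirected edges, and they never reverse or delete edges. By \citet{meek1995causal}, the DAGs consistent with $\mathcal{M}$ are exactly the members of $[\mathcal{C}]$ that agree with $\mathcal{B}$. Therefore $[\mathcal{M}]\subseteq[\mathcal{C}]$, which is the monotone part of (i).

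For the strictness claim in (i), I will have to be honest that $\mathcal{B}\neq\emptyset$ alone does not suffice. If every constraint in $\mathcal{B}$ is already entailed by $\mathcal{C}$, then $\mathcal{M}=\mathcal{C}$. I would therefore prove strictness under the (implicitly intended) assumption that $\mathcal{B}$ is \emph{informative}, meaning it orients at least one edge $A-B$ that is undirected in $\mathcal{C}$. By the standard CPDAG property (Andersson et al.), an undirected edge in $\mathcal{C}$ is oriented $A\to B$ in some member of $[\mathcal{C}]$ and $B\to A$ in another. The member with the orientation opposite to $\mathcal{B}$ lies in $[\mathcal{C}]\setminus[\mathcal{M}]$.

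For (ii), I use the amenability criterion of \citet{perkovic2018complete}: every proper possibly causal path from $X$ to $Y$ must start with a directed edge out of $X$. By the structural fact, any path that is possibly causal in $\mathcal{M}$ is also possibly causal in $\mathcal{C}$, since adding orientations can only destroy the possibly-causal property, never create it. An edge at $X$ that is undirected in $\mathcal{M}$ was also undirected in $\mathcal{C}$. So a violating path in $\mathcal{M}$ would give a violating path in $\mathcal{C}$, and amenability is preserved. For the converse claim, a two-node example suffices: $\mathcal{C}$ is $X-Y$, which is non-amenable, and $\mathcal{B}=\{X\to Y\}$ gives an amenable $\mathcal{M}$. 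A slightly richer example in the spirit of \Cref{fig:bload-overview} can be added for illustration.

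For (iii), validity follows from the DAG-level characterization of adjustment. A set is a valid adjustment set relative to $(X,Y)$ in a CPDAG or MPDAG if and only if it is valid in every DAG it represents (\citealp{perkovic2018complete}, extended to MPDAGs by \citealp{perkovic2017interpreting}). $\mathbf{O}^\star$ is valid in all of $[\mathcal{C}]\supseteq[\mathcal{M}]$, and by (ii) $\mathcal{M}$ is amenable, so $\mathbf{O}^\star$ is valid in $\mathcal{M}$.

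For the ``may improve'' clause, I would compute the optimal set $\mathbf{O}(X,Y)=\Pa(\mathrm{cn})\setminus\mathrm{forb}$ of \citet{henckel2022graphical} in both graphs. I would then exhibit a configuration in which a node adjacent to a mediator via an undirected edge in $\mathcal{C}$ becomes a parent or child in $\mathcal{M}$, so that $\mathbf{O}$ changes. The variance comparison would follow from Henckel et al.'s result that $\mathbf{O}$ is optimal among valid sets in the refined graph, and $\mathbf{O}^\star$ is one such set.

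I expect the main obstacle to be this last step together with the strictness caveat. The optimal-set formula must be well defined in MPDAGs, which requires checking that Henckel et al.'s theory transfers to amenable MPDAGs. The strict statements also need the informativeness assumption on $\mathcal{B}$ made explicit.
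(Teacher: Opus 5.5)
The gap is the last clause of (iii). Your arguments for (i), (ii) and the validity half of (iii) are sound. Your informativeness caveat is a genuine correction: the paper's proof of (i) just asserts that $\mathcal{B}\neq\emptyset$ orients some edge, which fails whenever every edge of $\mathcal{B}^+$ is already compelled in $\mathcal{C}$.

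The configuration you plan to exhibit for (iii), where $\mathbf{O}$ changes between $\mathcal{C}$ and $\mathcal{M}$, does not exist. The clause presupposes that $\mathbf{O}^\star$ is identified from $\mathcal{C}$, i.e.\ that $\mathcal{C}$ is amenable. In an amenable CPDAG, $\mathbf{O}(X,Y,\mathcal{D})=\Pa_{\mathcal{D}}(\mathrm{cn})\setminus\mathrm{forb}$ is the same for every $\mathcal{D}\in[\mathcal{C}]$. Sketch of why:
\begin{itemize}
\item In a CPDAG, $a\to b - c$ forces $a\to c$. So an edge entering a chain component $\tau$ makes all of $\tau$ children of its tail, and $\tau$ has a single set of external parents $\Pa_{\mathcal{C}}(\tau)$.
\item Every member DAG orients $\tau$ without v-structures, hence with a unique source.
\item Amenability forces every causal path from $X$ to $Y$ in a member DAG to leave $X$ through an edge already directed in $\mathcal{C}$.
\end{itemize}
Let $\mathcal{T}$ be the set of chain components that are reachable, in the DAG of components, from a component into which $X$ points, and from which $Y$'s component is reachable. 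The three facts give, for every $\mathcal{D}\in[\mathcal{C}]$:
\begin{itemize}
\item $\mathrm{forb}(X,Y,\mathcal{D})$ is $X$ together with all components reachable from $\mathcal{T}$;
\item $\mathcal{T}$ is exactly the set of components meeting $\mathrm{cn}(X,Y,\mathcal{D})$;
\item since undirected neighbours of $\mathrm{cn}$-nodes lie in $\mathrm{forb}$, $\mathbf{O}(X,Y,\mathcal{D})=\bigcup_{\tau\in\mathcal{T}}\Pa_{\mathcal{C}}(\tau)\setminus\mathrm{forb}$.
\end{itemize}
None of these depends on $\mathcal{D}$.

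Hence every DAG in $[\mathcal{M}]\subseteq[\mathcal{C}]$ has optimal set $\mathbf{O}^\star$, so $\mathcal{M}$ cannot produce a different, let alone strictly smaller, optimal set. Moreover, $\mathbf{O}^\star=\mathbf{O}(X,Y,\mathcal{D}_{\mathrm{true}})$ already minimises asymptotic variance over all sets valid in the true DAG, by Henckel et al.'s DAG result. That family contains every set valid in $\mathcal{M}$, so your weak inequality $\mathrm{a.var}(\mathbf{O}_{\mathcal{M}})\le\mathrm{a.var}(\mathbf{O}^\star)$ always holds with equality.

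So the obstacle is not transferring Henckel et al.'s theory to MPDAGs: read under its own hypothesis, the clause is false. The paper's justification, that extra orientations ``can change the set of forbidden nodes or the parents of $X$'', is a heuristic that this invariance refutes as well. The only non-vacuous version is the case where $\mathcal{C}$ is not amenable, so nothing is obtainable from $\mathcal{C}$ while $\mathcal{M}$ returns an optimal set. That is just the converse half of (ii), already settled by your $X - Y$ example. You should restate or drop the clause rather than search for a witness.
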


\subsection{Modelling Background Knowledge}
\label{subsec:sampling}

\paragraph{Working assumption: partial but perfect knowledge.}
We model background knowledge as a partial, fully correct snapshot of the
true causal graph: any constraint supplied to \method\ is consistent with the
data-generating DAG $\mathcal{G}$. Formally, given a true DAG
$\mathcal{G} = (\mathcal{V}, \mathcal{E})$, background knowledge is a set
\[
    \mathcal{B}^+ \;\subseteq\; \mathcal{E}
\]
of required directed edges, so that every orientation supplied to \method\
is a true directed edge of $\mathcal{G}$. This formalizes the standard
correctness assumption in the background-knowledge causal-discovery
literature~\citep{meek1995causal,perkovic2017interpreting}. As a deliberate
design choice, we restrict ourselves to the required edges in this work---the
most common form of curated prior knowledge in practice---and leave
forbidden edges to the data-driven skeleton step.

The motivation is empirical. In high-expertise application domains, curated
knowledge graphs and expert-elicited constraints typically encode a subset
of well-established causal relationships rather than a complete description
of the system. Our working assumption is faithful to this regime, where the
practical limitation is \emph{coverage} rather than \emph{correctness}.
This is precisely the setting in which our real-data evaluations
(\Cref{sec:empirical_evaluation}) operate.

We deliberately stress-test this assumption rather than rely on it:
\Cref{sec:discussion,app:noised} report a preliminary
analysis of \method\ under controlled corruption of $\mathcal{B}^+$,
characterizing the degradation regime as the knowledge becomes noisy.

\smallskip
\myparagraph{Two sampling strategies for $\mathcal{B}^+$.}
Given a query $(X, Y)$, where in $\mathcal{G}$ does a practitioner's prior
knowledge actually live? Two stylized scenarios capture the dominant
practical cases. Both are parameterized by a fraction $\rho \in (0, 1)$
controlling the proportion of edges from the eligible pool revealed to
\method\ as background knowledge.

\smallskip
\noindent\emph{Local sampling
strategy.}\footnote{Not to be confused with \emph{local causal discovery},
which describes \method\ itself.}
$\mathcal{B}^+$ is sampled uniformly from edges of $\mathcal{E}$
\emph{incident to either $X$ or $Y$}. This simulates a practitioner who possesses incomplete but highly accurate
domain knowledge about the immediate neighborhoods of $X$ and $Y$.

\smallskip
\noindent\emph{Global sampling strategy.}
$\mathcal{B}^+$ is sampled uniformly from the full edge set $\mathcal{E}$
without conditioning on $X$ or $Y$. This simulates the scenario in which a
practitioner has access to a general domain knowledge graph that may or
may not happen to touch the variables of the specific query.

Together the two strategies let us disentangle \emph{how much} prior
knowledge is available from \emph{where} it lies, providing a controlled
instrument for the experiments of \Cref{sec:empirical_evaluation}.

\subsection{The \method\ Algorithm}
\label{subsec:bload}

\Cref{alg:bload} presents \method. The outer loop is that of
LOAD~\citep{schubert2025local}: invoke local discovery around each target
via \textsc{LocalMPDAG}, check that $X$ is a proper ancestor of $Y$ and that
the resulting graph is amenable w.r.t.\ $(X,Y)$~\citep{perkovic2018complete},
then extract the optimal adjustment set via the criterion of
\citet{henckel2022graphical}. Background knowledge enters the inner
procedure \textsc{LocalMPDAG} at (a)--(d).

\begin{algorithm}[t]
\caption{\textbf{\method}: Background knowledge-augmented Local Optimal Adjustment set Discovery}
\label{alg:bload}
\label{alg:mbbymbmpdag}
\begin{algorithmic}[1]
\Require Targets $X, Y$; data $\mathcal{D}$; CI oracle $\mathcal{I}$; background knowledge $\mathcal{B}^+$
\Ensure Optimal adjustment set $\mathbf{O}^\star$, or \textsc{NotIdentifiable}
\Statex \textit{// Knowledge-aware local discovery around each target}
\State $G_X \leftarrow \Call{LocalMPDAG}{X, \mathcal{D}, \mathcal{B}^+, \mathcal{I}}$
\State $G_Y \leftarrow \Call{LocalMPDAG}{Y, \mathcal{D}, \mathcal{B}^+, \mathcal{I}}$
\State $\mathcal{M}_{\mathrm{loc}} \leftarrow G_X \cup G_Y$
\Statex \textit{// Identifiability and optimal-adjustment-set extraction}
\If{$X$ is not a proper ancestor of $Y$ in $\mathcal{M}_{\mathrm{loc}}$}
    \State \Return \textsc{NotIdentifiable}
\EndIf
\If{$\mathcal{M}_{\mathrm{loc}}$ is not amenable w.r.t.\ $(X, Y)$~\citep{perkovic2018complete}}
    \State \Return \textsc{NotIdentifiable}
\EndIf
\State $\mathbf{O}^\star \leftarrow \Call{OptimalAdjSet}{X, Y, \mathcal{M}_{\mathrm{loc}}}$ \Comment{\citep{vanderzander2019separators,witte2020efficient}}
\State \Return $\mathbf{O}^\star$
\vspace{0.3em}\hrule\vspace{0.3em}
\Procedure{LocalMPDAG}{$Z, \mathcal{D}, \mathcal{B}^+, \mathcal{I}$}
    \Statex \quad\textit{// (a) Initialize with background knowledge, not the empty graph}
    \State $G \leftarrow (\mathcal{V}, \mathcal{B}^+)$;\quad
           $\textsc{Done} \leftarrow \emptyset$;\quad
           $\textsc{Wait} \leftarrow \{Z\}$
    \While{$\textsc{Wait} \neq \emptyset$}
        \State Pop $U$ from $\textsc{Wait}$; add $U$ to $\textsc{Done}$
        \State Find $\mathrm{MB}(U)$; learn marginal graph $L_U$ over $\mathrm{MB}(U) \cup \{U\}$
        \Statex \quad\quad\textit{// (b) BK-protected update: re-impose $\mathcal{B}^+$ after every local edit}
        \State Add edges and v-structures of $L_U$ involving $U$ to $G$;\;\textbf{then} re-impose $\mathcal{B}^+$
        \Statex \quad\quad\textit{// (c) Four-rule Meek closure for MPDAGs (R4 essential)}
        \State Exhaustively apply Meek's rules R1--R4 to $G$ (\Cref{fig:meek-rules})
        \Statex \quad\quad\textit{// (d) BK-aware frontier expansion}
        \State $\begin{aligned}[t]
        \textsc{Wait} \leftarrow{}&
        \{\,U' \in \mathcal{V} \mid
        U' \text{ connected to } Z \\
        &\text{by an undirected path in } G\,\} \setminus \textsc{Done}
        \end{aligned}$
    \EndWhile
    \State \Return $G$
\EndProcedure
\end{algorithmic}
\end{algorithm}

\emph{(a) Initialization.}
Seeding the local graph with $\mathcal{B}^+$ (rather than the empty graph)
makes prior orientations available to the first v-structure check and Meek
pass, so they enter the search trajectory rather than patching its output.

\emph{(b) Protected updates.}
After each Markov-blanket discovery and orientation step, $\mathcal{B}^+$ is
re-imposed on the working graph. This serves two purposes. First, it prevents
finite-sample errors of the PC-style skeleton substep from deleting required
edges and thereby invalidating subsequent Meek propagation. Second, it
resolves any conflict between a data-driven orientation and a constraint in
$\mathcal{B}^+$ in favor of $\mathcal{B}^+$: the re-imposition overwrites the
conflicting data-driven orientation, so the working graph after step~(b) is
consistent with $\mathcal{B}^+$ by construction. Because
$\mathcal{B}^+ \subseteq \mathcal{E}$ for the true acyclic DAG $\mathcal{G}$,
the subset of edges fixed by $\mathcal{B}^+$ is itself acyclic, and the
conflict-resolution rule guarantees that no directed cycle can be introduced
by the imposition of background knowledge. Edges outside $\mathcal{B}^+$ are
oriented by data-driven v-structure detection and Meek propagation, both of
which are sound under the standard assumptions (Theorem~1 of
\citet{zheng2026local}) and preserve acyclicity by construction of the rules
themselves \citep{meek1995causal}.

\emph{(c) Four-rule Meek closure.}
Propagation uses the four-rule closure of \citet{perkovic2017interpreting};
R4 is what lets data-driven and BK-driven orientations compound into a
maximally oriented MPDAG rather than merely coexist.

\emph{(d) Frontier expansion.}
Recomputing the wait-list after every propagation step lets BK-induced
orientations shrink the frontier (by directing previously undirected edges)
and enlarge it (by revealing newly relevant nodes)---the soundness
condition of Theorem~1 of \citet{zheng2026local}.

\section{Empirical Evaluation}\label{sec:empirical_evaluation}

We evaluate \method\ against three baselines that, together, span the $2{\times}2$ design of \{global, local\} structure learning $\times$ \{without, with\} background knowledge:
\begin{itemize}[leftmargin=*,nosep]
    \item \textbf{PC}~\citep{spirtes2000causation}: global, data-driven.
    \item \textbf{b-PC}: naive augmentation of PC with the four-rule Meek propagation
          of $\mathcal{B}$; the natural ``global learner $+$ BK'' baseline.
    \item \textbf{LOAD}~\citep{schubert2025local}: local, data-driven.
    \item \textbf{\method}: local, knowledge-informed (this work).
\end{itemize}
This design isolates whether the empirical gains attributable to \method\ stem from local discovery, from background knowledge, or from their synergy. Where applicable, each knowledge-augmented method is further evaluated under both \emph{local} and \emph{global} BK sampling (see \cref{subsec:sampling}). All experiments are run over $1000$ random seeds to obtain mean estimates with tight confidence intervals.%

\subsection{Experimental Setup}
\label{subsec:setup}

\myparagraph{Background knowledge sampling.}
We instantiate the local and global sampling strategies of
\cref{subsec:sampling} at every fraction
$\rho \in \{0.1, 0.2, \ldots, 0.9\}$, running each
$(\rho,\,\text{strategy})$ configuration independently. Sampled edge
orientations are assumed correct in every configuration except in the
preliminary noise-robustness analysis of \Cref{app:noised}, where we corrupt a fraction of them.

\myparagraph{Evaluation metrics.}
We report two complementary metrics. The \emph{F1 score on optimal
adjustment set identification} measures how accurately each method
recovers the ground-truth optimal adjustment set $\mathbf{O}^\star$,
balancing precision and recall over its constituent variables. As a
downstream metric we adopt the \emph{Intervention Distance} of
\citet{schubert2025snap}, defined as the distance between the
ground-truth causal effect $\Theta_{T,T'}$ from a target $T$ to $T'$
and the predicted causal effect $\widehat{\Theta}_{T,T'}$ for the same
pair obtained by adjusting on the returned set. Unlike F1, which
scores the \emph{composition} of the adjustment set, the intervention
distance scores the quantity the practitioner actually consumes---the
estimated causal effect---and is the metric of primary
interest in the downstream use of causal discovery.%

\myparagraph{Statistical power metric.}
Following \citet{kummerfeld2024power}, we summarize experimental conditions 
along a unified axis of \emph{statistical power per structural parameter}, 
defined as:
\begin{equation}
    \Pi = \frac{n}{\lvert \mathcal{V} \rvert \cdot d_{\mathrm{exp}} / 2},
    \label{eq:statpower}
\end{equation}
where $n$ is the number of samples, $\lvert \mathcal{V} \rvert$ the number 
of observed variables, and $d_{\mathrm{exp}}$ the expected node degree. 
The denominator approximates the total number of edges in the graph\footnote{The approximation $|\mathcal{V}| \cdot d_{\mathrm{exp}}/2$ is exact in expectation for Erd\H{o}s--R\'enyi graphs $G(n, d_{\mathrm{exp}}/n)$ and a tight estimate per realized graph.}, so
$\Pi$ represents the average number of samples available per edge to be
learned. Low $\Pi$ corresponds to data-scarce, high-complexity regimes 
where purely data-driven methods are expected to struggle; high $\Pi$ 
corresponds to favorable regimes where LOAD and \method\ should converge. 
This metric allows results across heterogeneous graph configurations to 
be collapsed onto a single interpretable axis.

\subsection{Synthetic Experiments}
\label{subsec:synthetic}

\myparagraph{Data generation.}
We generate random DAGs using the Erdős–Rényi model, providing fine-grained 
control over three axes of difficulty: the number of nodes 
$\lvert \mathcal{V} \rvert \in \{10, 15, 20, 25, 50, 100\}$, the expected 
degree $d_{\mathrm{exp}} \in \{2, 3, 4, 5, 6\}$, and the number of 
observational samples $n \in \{100, 200, 500, 1000, 2000, 5000, 10000\}$. 
Linear Gaussian structural equations are used throughout, with edge weights 
sampled uniformly from $[-1, -0.25] \cup [0.25, 1]$ to avoid near-zero 
effects. For each configuration, causal queries $(X, Y)$ are drawn uniformly 
from ancestral pairs in $\mathcal{G}$.

\myparagraph{F1 versus statistical power.}
\Cref{fig:load_comparison} resolves F1 along the statistical-power
axis $\Pi$ (\Cref{eq:statpower}) for representative BK fractions
$\rho$, aggregating across graph sizes, edge densities, and sample
sizes. Both \method\ variants dominate every panel and are nearly
indistinguishable, jointly forming the top curve at all $\rho$.
LOAD and the two b-PC variants occupy the middle band---and
notably, b-PC-local edges out b-PC-global on F1, the inverse of
the pattern under \method, suggesting that globally distributed
constraints help F1 most when local search is doing its job. PC is
the lowest curve throughout. The relative gain of \method\ is
largest at low $\Pi$, where data-driven CPDAGs are weakly oriented,
and shrinks at high $\Pi$ as PC and LOAD recover well-oriented
CPDAGs from data alone, consistent with \Cref{prop:identifiability}.
Ultimately, this demonstrates that injecting background knowledge via 
\method\ provides the most substantial benefit in challenging, low-signal 
regimes where purely observational data is insufficient for robust causal discovery.

\begin{figure}[t]
    \centering
    \includegraphics[width=\columnwidth]{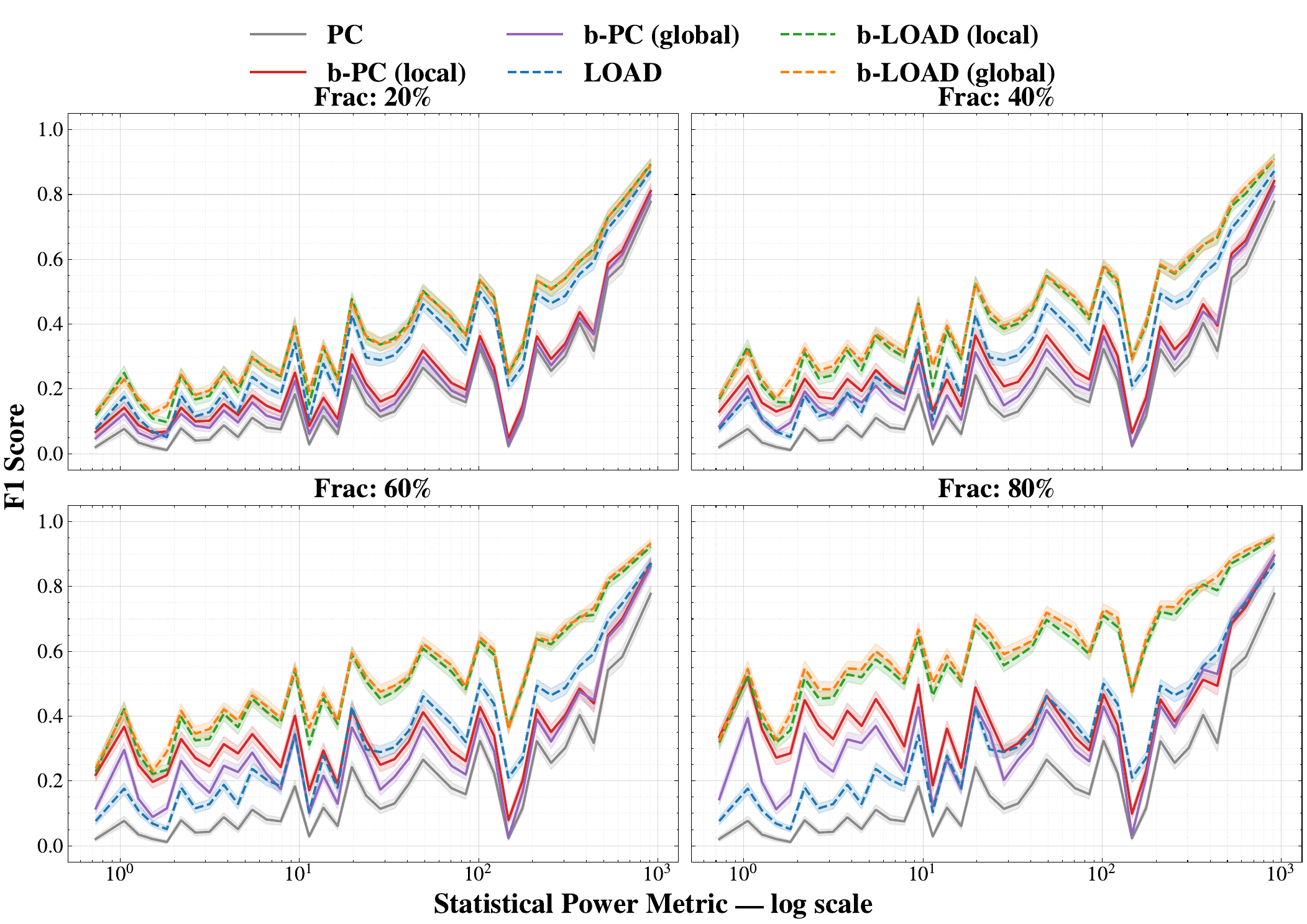}
    \caption{F1 vs.\ statistical power $\Pi$ (\Cref{eq:statpower})
    for a $2{\times}2$ grid of BK fractions $\rho \in \{20\%, 40\%, 60\%, 80\%\}$
    and method variants (higher is better). \method\ (global, orange dashed) and
    \method\ (local, green dashed) overlap at the top of every panel.
    Shaded regions: 95\% CI over 1000 seeds.}
    \label{fig:load_comparison}
    \vspace{-10pt}
\end{figure}%

\subsubsection{Data Efficiency}
\label{subsubsec:data_efficiency}

\Cref{fig:synth-A} plots Intervention Distance against sample size
$n$ for a graph size of $|\mathcal{V}|=10$ nodes and expected node degree $d_{\mathrm{exp}}=2$. The central takeaway is that
incorporating background knowledge acts as a powerful sample-size
substitute, particularly in the low-data regime where conditional
independence tests are traditionally unreliable. While the \method\ variants and the baseline LOAD algorithm
largely converge to similar performance levels at large sample sizes, a substantial performance gap exists when data is scarce.
Focusing on the early data regimes ($n \leq 10^3$), the \method\ variants
consistently achieve the lowest intervention distances. The local variant,
\method\ (local), performs exceptionally well here, visibly outperforming
both its global counterpart and standard LOAD. Notably, \method\ (local)
maintains a remarkably depressed intervention distance across these smaller 
sample sizes, demonstrating that the synergy of a local-discovery backbone 
and background knowledge dramatically accelerates accurate causal discovery 
well before large datasets become available.
\subsubsection{Sensitivity to Background Knowledge}
\label{subsubsec:bk_sensitivity}

\Cref{fig:synth-B} plots Intervention Distance against the BK
fraction $\rho$ across four sample sizes for a graph size of expected node degree $d_{\mathrm{exp}}=2$. Expectedly, the unaugmented 
PC and LOAD baselines remain flat. While all knowledge-augmented methods 
improve as $\rho$ increases, the \method\ variants demonstrate a distinct 
advantage at low background knowledge exposures (e.g., $\rho \leq 0.4$). 
In this scarce-knowledge regime, \method\ achieves a substantial 
reduction in error, significantly outpacing both b-PC variants. This stems from \method's structural design: it integrates $\mathcal{B}$ 
during the local learning process, re-propagating Meek's rules at every 
Markov-blanket update so that even sparse constraints cascade widely. 
Conversely, b-PC variants inject constraints post-hoc into an already-fixed 
CPDAG, yielding a much smaller propagation footprint. Thus, \method\ 
extracts a markedly higher marginal return from limited prior knowledge.

\subsubsection{Scale-Preserving Advantage}
\label{subsubsec:scalability}

\Cref{fig:synth-C} plots Intervention Distance against graph size $|\mathcal{V}|$
for four sample sizes, fixing the background knowledge fraction at 
$\rho=0.3$ and expected degree at $d_{\mathrm{exp}}=2$. While error naturally 
rises with $|\mathcal{V}|$ across all methods due to the proliferation of spurious
adjustment sets and unresolved orientations, the \method\ variants 
maintain a distinct scalability advantage. Crucially, the performance gap between \method\ and the baselines does 
not collapse at scale; it actively widens as $|\mathcal{V}|$ increases, particularly
in lower-data regimes ($n \in \{100, 1000\}$). The two \method\ variants 
perform almost identically across all graph sizes, consistently outperforming 
both b-PC variants. This widening gap highlights a structural advantage: 
\method\ uses targeted prior knowledge to prune false candidates before 
they multiply with graph size, bypassing the quadratic costs of full-skeleton 
revisions. This confirms that \method\ is 
simultaneously sample-efficient, knowledge-efficient, and scale-preserving. \\

\begin{figure*}[t]
    \centering
    \begin{subfigure}[t]{0.78\linewidth}
        \centering
        \includegraphics[width=\linewidth]{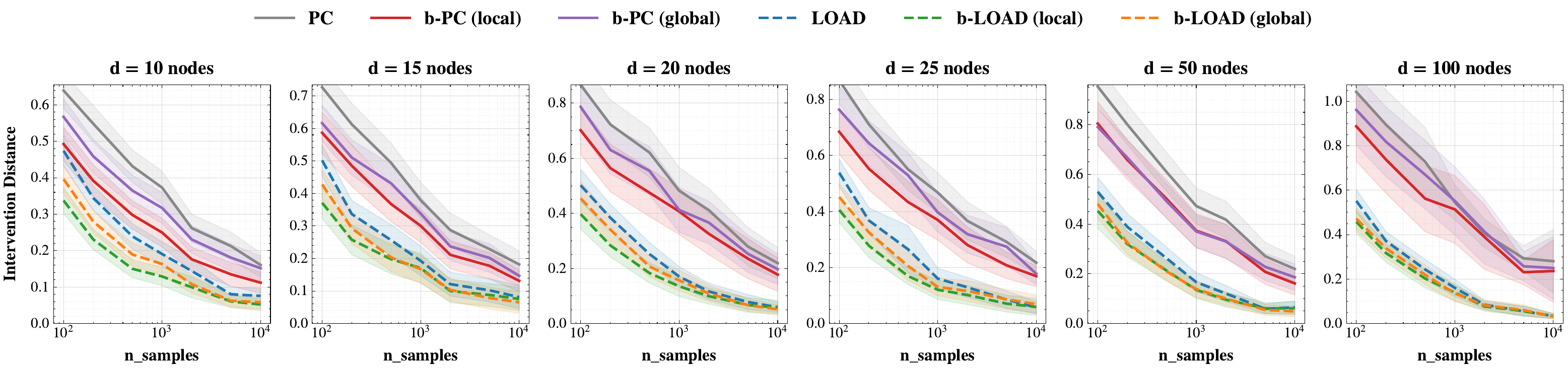}
        \caption{\textbf{Data efficiency.} Intervention Distance vs.\
        sample size $n$ (log scale) for $|\mathcal{V}| \in \{10, 15, 20, 25, 50, 100\}$
        at $\rho{=}0.3$ BK.}
        \label{fig:synth-A}
    \end{subfigure}

    \vspace{0.4em}

    \begin{subfigure}[t]{0.78\linewidth}
        \centering
        \includegraphics[width=\linewidth]{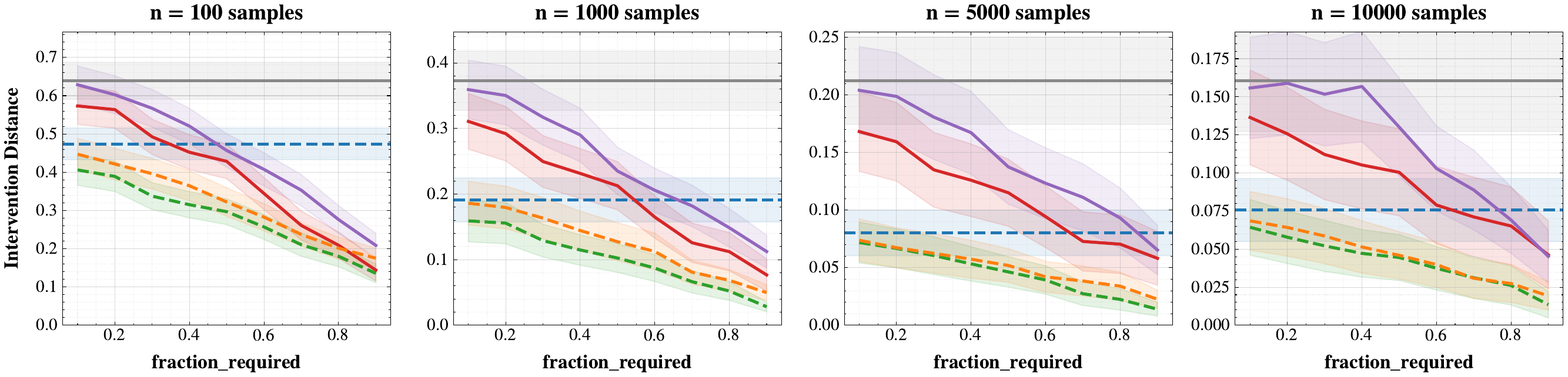}
        \caption{\textbf{BK sensitivity.} Intervention Distance vs.\
        BK fraction $\rho$ for $n \in \{100, 1000, 5000, 10000\}$ at
        $|\mathcal{V}|{=}10$. PC and LOAD are flat (BK-invariant); all four
        knowledge-augmented methods slope down, with the \method\
        slopes visibly steeper than either b-PC variant.}
        \label{fig:synth-B}
    \end{subfigure}

    \vspace{0.4em}

    \begin{subfigure}[t]{0.78\linewidth}
        \centering
        \includegraphics[width=\linewidth]{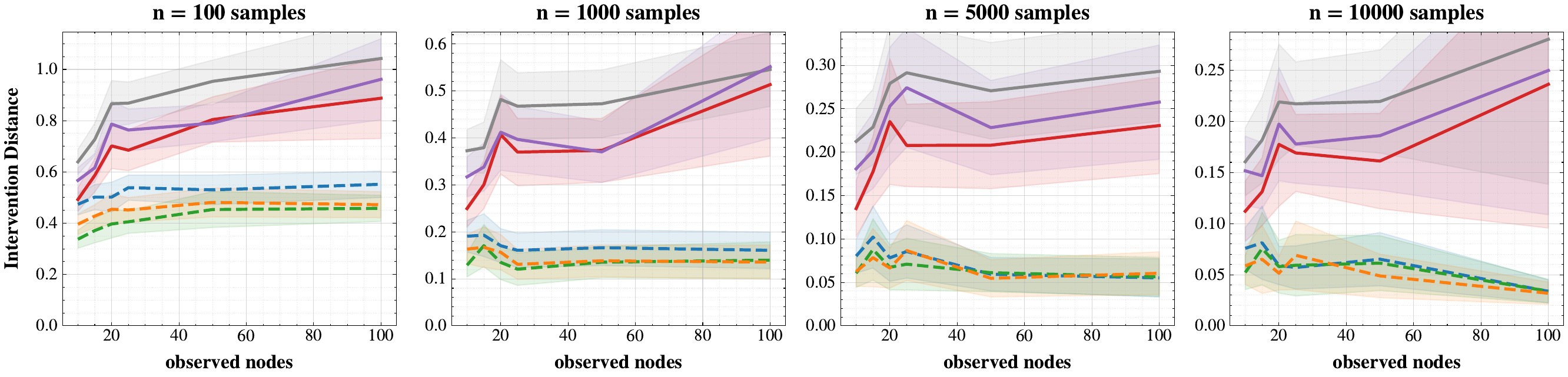}
        \caption{\textbf{Scale-preservation.} Intervention Distance vs.\
        observed nodes $|\mathcal{V}|$ for $n \in \{100, 1000, 5000, 10000\}$
        at $\rho{=}0.3$ BK.}
        \label{fig:synth-C}
    \end{subfigure}

    \caption{Intervention Distance (lower is better) on synthetic ER2 graphs across three
    orthogonal views: data efficiency~(\subref{fig:synth-A}),
    background-knowledge sensitivity~(\subref{fig:synth-B}), and
    scale-preservation~(\subref{fig:synth-C}). Both \method\ variants---local
    (green dashed) and global (orange dashed) BK sampling---attain the
    lowest error across all panels, with global slightly below local
    at small-to-moderate $n$ and the two near-overlapping at large $n$.}
    \label{fig:synth-combined}
\end{figure*}%

\myparagraph{Prescriptive summary by BK exposure and $\Pi$ regime.}
\Cref{tab:regime_results} evaluates the synthetic sweep through a background knowledge (BK) exposure lens, simulating different regimes of expert knowledge coverage. A clear trade-off emerges from the data. The \method\ variants dominate structure recovery, achieving the highest F1 scores across almost all exposure levels and $\Pi$ regimes. However, the optimal method for minimizing Intervention Distance shifts depending on knowledge availability. In the scarce-knowledge regime (10\%--30\%)---arguably the most realistic scenario for practitioners---\method-\emph{local} consistently secures the best Intervention Distance. Conversely, at Medium and High BK exposures, b-PC (global) surprisingly eclipses the other methods in Intervention Distance for low and medium $\Pi$ settings, even while \method\ maintains its F1 supremacy. We expand upon the practical implications of this divergence between structural accuracy and interventional performance in the discussion in \Cref{sec:discussion}.

\begin{table}[t]
\centering
\caption{Synthetic-data performance segmented by sample-complexity
regime: Low ($\Pi < 10$), Medium ($10 \leq \Pi < 100$), and High
($\Pi \geq 100$). Best entry in each column per BK exposure level is bolded.}
\label{tab:regime_results}
\resizebox{\columnwidth}{!}{%
\begin{tabular}{llcccccc}
\toprule
& & \multicolumn{2}{c}{\textbf{Low ($\Pi < 10$)}} & \multicolumn{2}{c}{\textbf{Medium ($10 \le \Pi < 100$)}} & \multicolumn{2}{c}{\textbf{High ($\Pi \ge 100$)}} \\
\cmidrule(lr){3-4} \cmidrule(lr){5-6} \cmidrule(lr){7-8}
\textbf{BK Exposure} & \textbf{Method} & \textbf{F1 $\uparrow$} & \textbf{Int. Dist. $\downarrow$} & \textbf{F1 $\uparrow$} & \textbf{Int. Dist. $\downarrow$} & \textbf{F1 $\uparrow$} & \textbf{Int. Dist. $\downarrow$} \\
\midrule
None & PC                 & $0.07$ & $2.90$ & $0.17$ & $2.15$ & $0.31$ & $1.28$ \\
     & LOAD               & $\mathbf{0.16}$ & $\mathbf{2.51}$ & $\mathbf{0.34}$ & $\mathbf{1.79}$ & $\mathbf{0.49}$ & $\mathbf{1.10}$ \\
\midrule
Low (10\% to 30\%)
& b-PC (local)          & $0.13$ & $2.73$ & $0.22$ & $1.85$ & $0.35$ & $1.07$ \\
& b-PC (global)         & $0.11$ & $2.63$ & $0.19$ & $1.78$ & $0.33$ & $1.05$ \\
& \method\ (local)      & $0.22$ & $\mathbf{2.42}$ & $\mathbf{0.39}$ & $\mathbf{1.70}$ & $\mathbf{0.54}$ & $\mathbf{0.99}$ \\
& \method\ (global)     & $\mathbf{0.23}$ & $2.52$ & $0.38$ & $1.75$ & $\mathbf{0.54}$ & $1.02$ \\
\midrule
Medium (40\% to 60\%)
& b-PC (local)          & $0.24$ & $2.42$ & $0.29$ & $1.58$ & $0.40$ & $0.91$ \\
& b-PC (global)         & $0.18$ & $\mathbf{2.07}$ & $0.24$ & $\mathbf{1.37}$ & $0.37$ & $\mathbf{0.81}$ \\
& \method\ (local)      & $0.32$ & $2.27$ & $0.47$ & $1.57$ & $0.61$ & $0.86$ \\
& \method\ (global)     & $\mathbf{0.34}$ & $2.40$ & $\mathbf{0.48}$ & $1.63$ & $\mathbf{0.62}$ & $0.92$ \\
\midrule
High (70\% to 90\%)
& b-PC (local)          & $0.39$ & $2.10$ & $0.37$ & $1.32$ & $0.45$ & $0.79$ \\
& b-PC (global)         & $0.28$ & $\mathbf{1.72}$ & $0.32$ & $\mathbf{1.10}$ & $0.44$ & $0.65$ \\
& \method\ (local)      & $0.50$ & $1.89$ & $0.62$ & $1.22$ & $0.73$ & $\mathbf{0.60}$ \\
& \method\ (global)     & $\mathbf{0.52}$ & $2.18$ & $\mathbf{0.64}$ & $1.37$ & $\mathbf{0.75}$ & $0.65$ \\
\bottomrule
\end{tabular}%
}
\end{table}

\subsection{Real-World Evaluation}
\label{subsec:realworld}

We evaluate \method\ on two canonical real-world benchmarks: the Sachs
protein-signaling dataset~\citep{sachs2005causal} and the five
multifactorial subnetworks of the DREAM4 \emph{In~Silico}
challenge~\citep{marbach2010dream4}. Together, they span distinct segments 
of the statistical-power axis $\Pi$ on which our synthetic results
have been organized (\Cref{tab:realworld-dims}): Sachs falls squarely into 
the medium-$\Pi$ regime ($\Pi \approx 51.7$), while DREAM4 sits at the extreme 
low end of the axis ($\Pi \approx 1$).

\begin{table}[t]
  \centering
  \caption{Dimensions and statistical-power regime of the two real-world
  benchmarks. $\Pi$ is computed as in \cref{eq:statpower}.}
  \label{tab:realworld-dims}
  
  \resizebox{\linewidth}{!}{
  \begin{threeparttable} 
    \begin{tabular}{lcccccc}
      \toprule
      Benchmark & $n$ & $|\mathcal{V}|$ & $d_{\mathrm{exp}}$ & $\Pi$ & Regime & Identifiable forward pairs\tnote{a} \\
      \midrule
      Sachs & $853$ & $11$ & $\approx 3$ & $\approx 51.7$ & Med & $46$ \\
      DREAM4 (per subnet) & $100$ & $100$ & $\approx 2$ & $\approx 1$ & Low & $329,\,517,\,681,\,1114,\,763$ \\
      \bottomrule
    \end{tabular}
    
    \begin{tablenotes}
      \item[a] These represent all $(X,Y)$ ancestral pairs (where $Y$ is a direct or indirect descendant of $X$) extracted from the true underlying DAG.
    \end{tablenotes}
  \end{threeparttable}%
  } 
\end{table}

\myparagraph{Curated knowledge graphs as a source of $\mathcal{B}^+$.}
In molecular-biology applications, curated knowledge graphs such as
Reactome~\citep{milacic2024reactome} catalog well-established biological
pathways. Each such pathway corresponds to a known directed edge in the
gold-standard regulatory or signaling network, so providing these
pathways as required edges in $\mathcal{B}^+$ is equivalent to drawing a
sample from the true edge set $\mathcal{E}$---the operationalization
studied in our synthetic experiments. The real-data sections below thus
instantiate the $\mathcal{B}^+ \subseteq \mathcal{E}$ modelling choice of
\cref{subsec:sampling}.

\myparagraph{Oracle ATE for Intervention Distance on real data.}
Real biological datasets provide observations and the gold-standard
DAG topology but not the underlying structural-equation weights, so
evaluating Intervention Distance against synthetic weights would
disconnect the ground truth from the data. We therefore construct an
\emph{oracle} linear~\gls{sem}: for every node $V$ in the gold-standard DAG we fit an ordinary least-squares regression of $V$ on its true parents,
and use the resulting coefficients as the oracle structural weights.
Intervention Distance is then the absolute difference between a
method's adjusted-ATE estimate and the oracle ATE obtained under the
same OLS estimator on the same data. Rather than serving as a redundant 
proxy for structural recovery, this metric directly quantifies the 
practical error in downstream causal effect estimation, operating under 
a first-order linear approximation of the underlying biological mechanisms.

The aggregate results on both benchmarks
(\Cref{tab:combined_results}, averaged over all $\rho$) reveal three
patterns. As a premise, to illustrate real-world utility against gold-standard models, 
F1 here measures the correct identification of adjustment sets on the \emph{True DAG} 
(rather than the CPDAG), and ``Queries identified'' represents the raw count of these 
relationships that successfully yield a valid adjustment set for accurate effect estimation. 
(i) Every BK-augmented method strictly improves over its purely data-driven counterpart 
across all metrics, mirroring the synthetic prescriptions of \Cref{tab:regime_results}. 
(ii) The optimal sampling strategy diverges based on the dataset. On the low-$\Pi$ DREAM4 
dataset, \method\ with \emph{global} sampling is exceptionally strong, identifying the most 
forward pairs ($340/3404$) while tying for the lowest Intervention Distance. However, on 
the medium-$\Pi$ Sachs dataset, b-PC (global) identifies the absolute most pairs ($11/46$), 
and \method-\emph{local} actually outperforms \method-global in both queries identified 
($8/46$ vs. $6/46$) and F1 score ($0.17$ vs. $0.13$). (iii) Despite these variations in 
graph-recovery accuracy, the practical choice for downstream causal-effect estimation 
remains \emph{local} sampling (see \Cref{sec:discussion}).

\begin{table}[t]
\centering
\caption{Aggregate results for the DREAM4 and Sachs datasets, averaged across BK fractions $\rho \in \{0.1, \ldots, 0.9\}$. Forward pairs are reported as ``identified / total identifiable''.}
\label{tab:combined_results}
\resizebox{\columnwidth}{!}{%
\begin{tabular}{lccc}
\toprule
\textbf{Method} & \textbf{Queries identified $\uparrow$} & \textbf{F1 (adj. set) $\uparrow$} & \textbf{Intervention Dist. $\downarrow$} \\
\midrule
\multicolumn{4}{c}{\textbf{DREAM4 (5 Subnetworks)}} \\
\midrule
LOAD                       & $26 / 3404$           & $0.01$          & $0.07$ \\
\method\ (global sampling) & $\mathbf{340} / 3404$ & $0.11$          & $\mathbf{0.04}$ \\
\method\ (local sampling)  & $\mathbf{338} / 3404$ & $0.11$          & $\mathbf{0.04}$ \\
\midrule
PC                         & $7 / 3404$            & $0.00$          & $0.07$ \\
b-PC (global sampling)     & $82 / 3404$           & $0.03$          & $0.06$ \\
b-PC (local sampling)      & $317 / 3404$          & $\mathbf{0.13}$ & $0.05$ \\
\midrule
\multicolumn{4}{c}{\textbf{Sachs}} \\
\midrule
LOAD                       & $0 / 46$              & $0.00$          & $0.14$ \\
\method\ (global sampling) & $6 / 46$              & $0.13$          & $\mathbf{0.07}$ \\
\method\ (local sampling)  & $8 / 46$              & $0.17$          & $\mathbf{0.07}$ \\
\midrule
PC                         & $0 / 46$              & $0.00$          & $0.10$ \\
b-PC (global sampling)     & $\mathbf{11} / 46$    & $\mathbf{0.24}$ & $\mathbf{0.07}$ \\
b-PC (local sampling)      & $7 / 46$              & $0.15$          & $\mathbf{0.07}$ \\
\bottomrule
\end{tabular}%
}
\end{table}

\subsubsection{Sachs: Results}
\label{subsec:sachs}

In the medium-$\Pi$ regime of Sachs, intricate signaling cascades make
the causal structure difficult to recover from observational data
alone; the test is whether \method\ extracts an additional,
complementary benefit from prior knowledge on top of any data-driven
baseline.

We compare all methods at every
$\rho \in \{0.1, \dots, 0.9\}$ under both global and local BK
sampling (\Cref{fig:sachs-results}). The results reveal a nuanced performance landscape where the optimal method shifts depending on how knowledge is sampled. On the F1 metric, while b-PC exhibits a sharp, late spike under global sampling at high $\rho$, \method\ under \emph{local} sampling demonstrates a highly robust and steady climb. It consistently outperforms b-PC locally across almost all fractions, ultimately reaching an impressive F1 of nearly $0.40$ at $\rho{=}0.9$. 

This strength of \method-\emph{local} is even more pronounced on the downstream Intervention Distance metric. While global sampling introduces some volatility for both b-methods, \emph{local} sampling yields exceptionally clean, monotonic improvements. Under this local strategy, \method's Intervention Distance decreases predictably as $\rho$ grows, edging out b-PC to achieve a near-zero error at $\rho{=}0.9$. Locally targeted prior knowledge combined with \method's learning loop therefore provides the most reliable and highest-performing causal-effect estimation on Sachs---an empirical nuance we unpack further in the discussion.

\begin{figure}[t]
    \centering
    \includegraphics[width=\columnwidth]{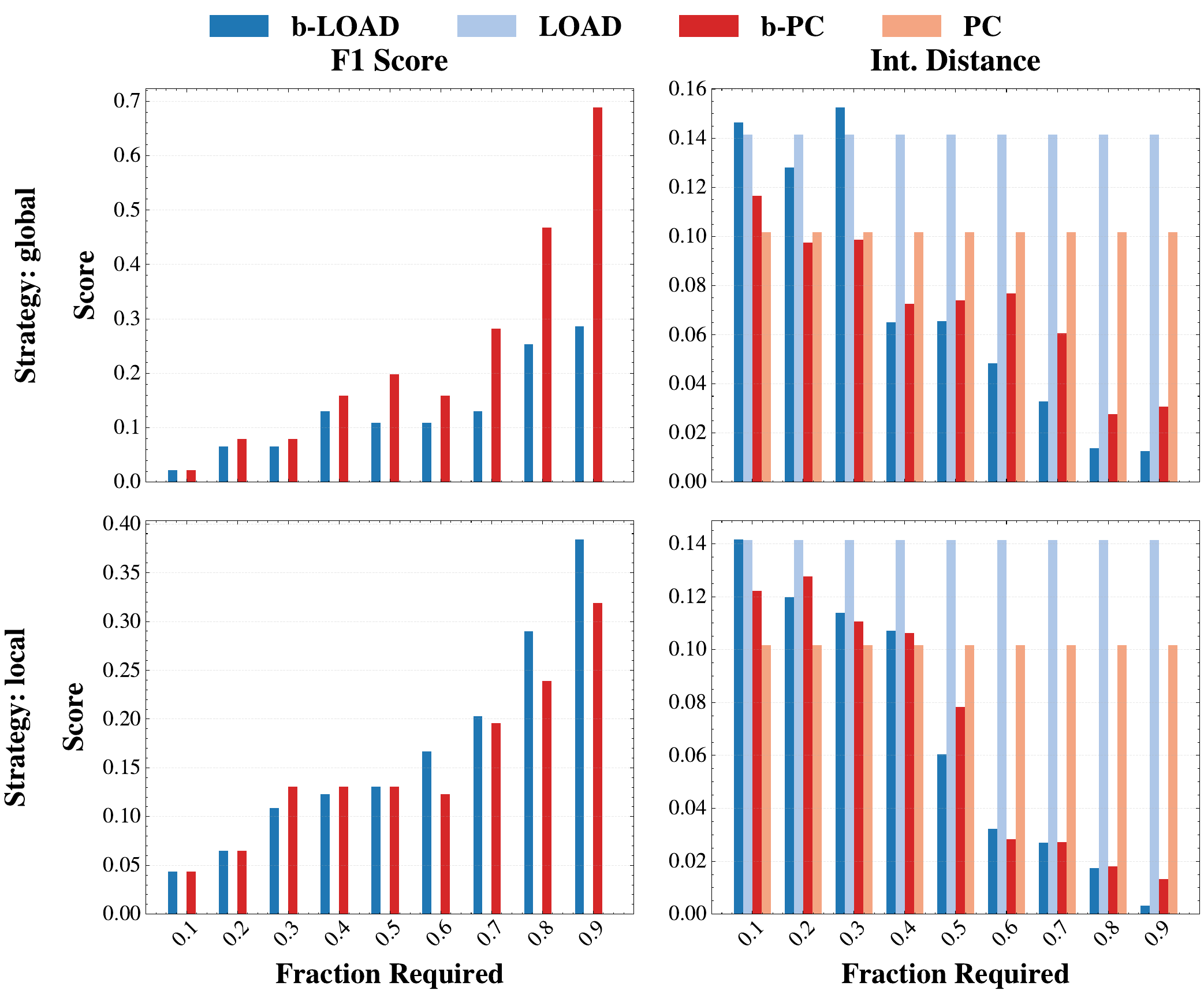}
    \caption{Sachs: F1 (left) and Intervention Distance (right) under
    global (top) and local (bottom) BK sampling. \method\ (blue)
    dominates both metrics.}
    \label{fig:sachs-results}
\end{figure}%

\subsubsection{DREAM4: Results}
\label{subsec:dream4}

In the fragile low-$\Pi$ regime of DREAM4, \Cref{fig:dream4-averaged} details F1 and Intervention Distance across the BK-fraction ($\rho$) axis. Two baseline patterns hold across all fractions: (i) BK-augmented methods entirely dominate knowledge-free baselines (PC and LOAD), which post near-zero F1 throughout, and (ii) \method\ achieves the lowest Intervention Distance at every $\rho$ under both sampling strategies.

The global-vs-local nuance is starkest on Intervention Distance. Under \emph{global} sampling, b-PC's error actually rises at high $\rho$, echoing the Sachs degradation. Conversely, \emph{local} sampling yields predictable, monotonically decreasing errors for both \method\ and b-PC. Locally targeted prior knowledge thus remains the most reliable strategy for downstream causal-effect estimation.

On F1, the optimal method splits by strategy. \method\ strictly dominates under \emph{global} sampling (reaching F1 $\approx 0.23$ vs.\ b-PC's $\approx 0.09$ at $\rho{=}0.9$). Under \emph{local} sampling, b-PC becomes highly competitive and slightly overtakes \method\ at $\rho{=}0.9$ ($\approx 0.27$ vs.\ $\approx 0.23$). Crucially, even when \method\ is marginally penalized on F1 locally, its downstream Intervention Distance remains the lowest, meaning this F1 swap does not translate into worse causal-effect estimates. We unpack the mechanics behind these local-vs-global dynamics and the F1-versus-Intervention-Distance disconnect in \Cref{sec:discussion}.

\begin{figure}[t]
    \centering
    \includegraphics[width=\columnwidth]{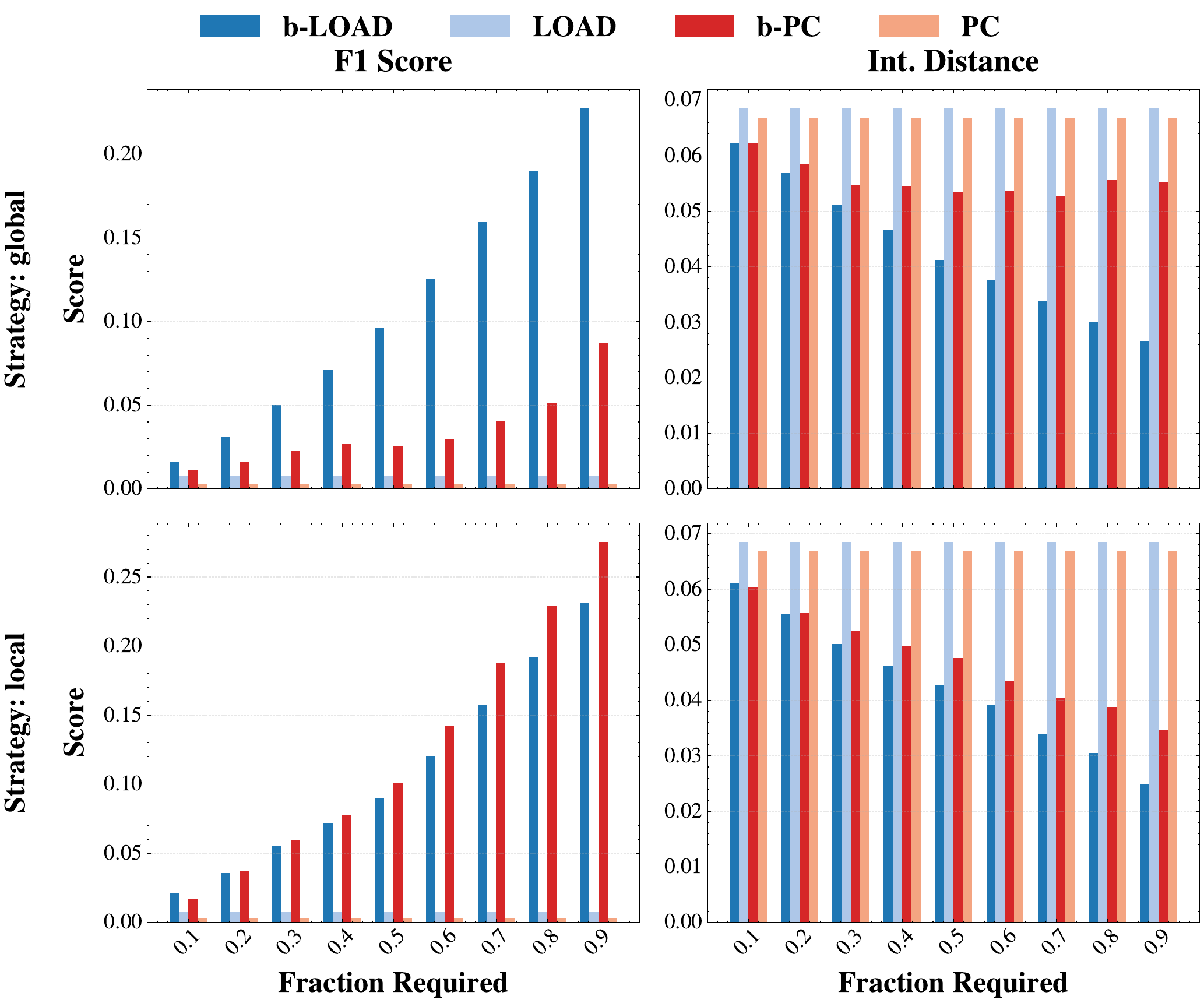}
    \caption{DREAM4 \emph{In~Silico}, averaged over five subnetworks:
    F1 (left) and Intervention Distance (right) under global (top)
    and local (bottom) BK sampling. \method\ lowest Intervention
    Distance throughout; b-PC competitive on F1 only under local
    sampling at high~$\rho$.}
    \label{fig:dream4-averaged}
\end{figure}%

\section{Discussion}\label{sec:discussion}

\myparagraph{When does background knowledge add the most value?}
\method\ yields the highest returns under data scarcity and high causal complexity (low~$\Pi$), where unreliable conditional independence tests leave the local CPDAG weakly oriented and background knowledge bypasses this limitation by deterministically reducing the equivalence class. While the choice between \emph{local} and \emph{global} sampling is metric-dependent—with global sampling often maximizing adjustment-set F1 (\cref{fig:load_comparison,tab:regime_results})—\emph{local} sampling proves remarkably efficient and competitive on Intervention Distance specifically when background knowledge exposure is low (10\%--30\%) (\cref{tab:regime_results,fig:sachs-results,fig:dream4-averaged}). This low-exposure regime is highly realistic, as practitioners in complex expert domains typically possess only partial, high-confidence knowledge localized to specific pathways rather than comprehensive graph coverage. In this realistic low-exposure regime, either sampling strategy of \method\ outperforms the naive global augmentation b-PC across both metrics; at higher exposures the ranking becomes metric-dependent (\Cref{tab:regime_results}). This yields a goal-driven recommendation: globally distributed orientations help when composing adjustment sets, whereas locally targeted orientations are preferable for downstream causal-effect estimation, a distinction particularly relevant in biomedicine where curated pathways are locally abundant.

\myparagraph{F1 versus Intervention Distance.}
The empirical evaluations reveal that improvements in structural accuracy (F1) do not monotonically or predictably translate to better causal effect estimation. Under global sampling, this relationship is highly volatile: on the Sachs dataset, b-PC clearly outperforms \method\ on both F1 and Intervention Distance at certain background knowledge exposures, yet the metrics behave erratically for both algorithms as $\rho$ increases. This unpredictability occurs because randomly distributed global constraints interact chaotically with adjustment sets; an injected global edge might improve the overall F1 score while inadvertently opening a backdoor path that severely biases the target's downstream estimate. Crucially, this erratic global regime is also epistemically demanding, as practitioners rarely possess randomly scattered, domain-wide causal links. However, under local sampling---a far more realistic knowledge budget restricting constraints to the target's immediate neighborhood---this unpredictability vanishes. In this regime, \method\ reliably and monotonically minimizes Intervention Distance, demonstrating that targeted local knowledge is essential for stable, bias-reducing effect estimation.

\myparagraph{Robustness to incorrect background knowledge: a preliminary note.}
Our main experiments assume strictly correct background knowledge~\citep{meek1995causal,perkovic2017interpreting}. Because real-world curated graphs often contain errors or context-specific exceptions, the practical question is how much corruption \method\ tolerates before becoming detrimental. We explored this by replacing required edges with false ones at a mean corruption rate $\mu$ on the synthetic sweep (\Cref{app:noised}); the parallel question for forbidden edges $\mathcal{B}^-$ is left for future work. Results suggest \method\ retains its full advantage at $\mu \leq 0.1$ and degrades gracefully on F1 at moderate noise ($\mu \in \{0.2, 0.3\}$) while preserving a residual Intervention-Distance advantage. It crosses an operational boundary near $\mu = 0.5$, where the F1 advantage collapses. These anecdotal but encouraging findings suggest that a full evaluation under misspecified $\mathcal{B}$ is a natural next step.

\myparagraph{Limitations and Future Directions.}
Several limitations suggest natural directions for future work. First, \method\ inherits the assumption of causal sufficiency; extending it to explicitly model latent confounders via MAGs or PAGs~\citep{zhang2008completeness,borboudakis2012incorporating} would broaden its applicability, and would naturally pair with non-adjustment identification approaches such as the general criterion of~\citet{tian2002general}. Second, the strict requirement of correct prior knowledge could be relaxed by developing mechanisms for \emph{weighted} constraints, using confidence scores to modulate influence in settings where priors are systematically biased. Finally, our empirical scope is restricted to linear-Gaussian SEMs and biological networks; evaluating \method\ on non-Gaussian or nonlinear data and on substantially larger graphs in domains like economics or epidemiology would strengthen the empirical case.

\section{Conclusion}\label{sec:conclusion}
We introduced \method, a knowledge-informed extension of the LOAD~\cite{schubert2025local} algorithm for local discovery of optimal adjustment sets. By incorporating background knowledge directly into the local structure-learning loop, \method\ refines the local equivalence class into a knowledge-constrained~\gls{mpdag} that can identify causal effects unreachable by purely data-driven methods. 
Experimental results on synthetic and real-world datasets show that \method\ leads to large gains in data-scarce, structurally complex regimes and enables causal-effect identification in settings where LOAD returns no answer. Further analyses show that locally targeted constraints yield the most reliable downstream effect estimates, offering actionable guidance for practitioners with partial domain knowledge.

\appendix
\crefalias{section}{appendix}
\section{Proofs for Section 4}

\begin{proof}[\textnormal{\textbf{Proof of~\Cref{lem:soundness}}}]
Background knowledge $\mathcal{B}$ is injected into the local graph prior 
to any structural discovery, so all directed edges in $\mathcal{B}^+$ are 
present from initialization and are consistent with $\mathcal{G}^*$ by 
validity of $\mathcal{B}$. The critical design feature of~\Cref{alg:mbbymbmpdag} is its
\emph{dynamic} \textsc{WaitList} (step~(d) of the \textsc{LocalMPDAG} procedure): after each Markov blanket
discovery and Meek propagation step, the frontier is recomputed as all 
nodes connected to $X$ by an undirected path in the current graph that 
have not yet been processed. This ensures that the nodes are newly reachable 
through a background-induced orientation in $\mathcal{B}$ are 
automatically enqueued, preventing the failure mode in which such nodes 
are never examined for v-structures or orientation implications. 

For every node $Z$ processed from the WaitList, the algorithm recovers its Markov blanket, learns the marginal graph over the relevant variables, and propagates all orientation implications through Meek's rules R1--R4. 
Under perfect conditional independence testing, every v-structure and subsequently oriented edge is guaranteed to match $\mathcal{G}^*$. The WaitList terminates when no new nodes are reachable, at which point all relevant unshielded triples have been examined and all implied orientations propagated. By Theorem~1 of \citet{zheng2026local}, this procedure is sound and complete 
over the local neighborhood of $X$, from which the three 
equalities follow directly.
\end{proof}
\begin{proof}[\textnormal{\textbf{Proof of Proposition~\ref{prop:identifiability}}}]
\textit{(i)} By construction, $\mathcal{M}$ is obtained from $\mathcal{C}$ 
by orienting a subset of undirected edges via the constraints in 
$\mathcal{B}^+$ and propagating through Meek's rules R1--R4. Each 
orientation step eliminates from the equivalence class those DAGs in 
which the constrained edge points in the opposite direction. Since 
$\mathcal{B} \neq \emptyset$, at least one edge is oriented, yielding a 
strictly smaller equivalence class. Acyclicity and the absence of new 
v-structures are preserved by the construction of Meek's rules 
\citep{meek1995causal}.

\textit{(ii)} Identifiability of the causal effect of $X$ on $Y$ via 
adjustment requires amenability of the graph w.r.t.\ $(X, Y)$: all 
proper causal paths from $X$ to $Y$ must begin with a directed edge out 
of $X$ \citep{perkovic2018complete}. If $\mathcal{C}$ is amenable, then 
all edges on proper causal paths incident to $X$ are already directed in 
$\mathcal{C}$; these directions are preserved in $\mathcal{M}$ by 
part~(i), so amenability is maintained. For the converse, consider a 
configuration in which a path $X \,{-}\, W \to Y$ exists in $\mathcal{C}$ 
with the edge $X \,{-}\, W$ undirected; $\mathcal{C}$ is not amenable 
w.r.t.\ $(X, Y)$ since the causal path through $W$ is unresolved. If 
$\mathcal{B}^+$ contains $X \to W$, then $\mathcal{M}$ orients this edge, 
rendering the path directed and $\mathcal{M}$ amenable w.r.t.\ $(X, Y)$, so the effect becomes identifiable.

\textit{(iii)} Validity of $\mathbf{O}^\star$ in $\mathcal{M}$ follows from
part~(i): $\mathbf{O}^\star$ satisfies the generalized adjustment criterion
\citep{perkovic2018complete} in $\mathcal{C}$, and since $\mathcal{M}$
represents a subset of the same DAGs, $\mathbf{O}^\star$ remains valid in
$\mathcal{M}$. The potential for a strictly better set in $\mathcal{M}$
follows from the optimality characterization of \citet{henckel2022graphical}: the optimal set depends on the graph's directed structure, and additional
orientations in $\mathcal{M}$ can change the set of forbidden nodes
(descendants of $Y$ that must be excluded) or the parents of $X$ available for adjustment.
\end{proof}

\section{Preliminary Noise-Robustness Analysis}
\label{app:noised}

Corruption mechanism: a fraction of required edges in $\mathcal{B}^+$
is replaced by false edges at mean rate $\mu$ (Gaussian with
$\sigma = \mu/3$; Bernoulli when the local BK budget is a single edge);
$1000$ seeds per configuration. \Cref{fig:noise} reports the two extreme
regimes ($\mu \in \{0.1, 0.5\}$); intermediate levels interpolate smoothly.
See \Cref{sec:discussion} for the regime breakdown and operating boundaries.

\begin{figure}[t]
    \centering
    \includegraphics[width=\columnwidth]{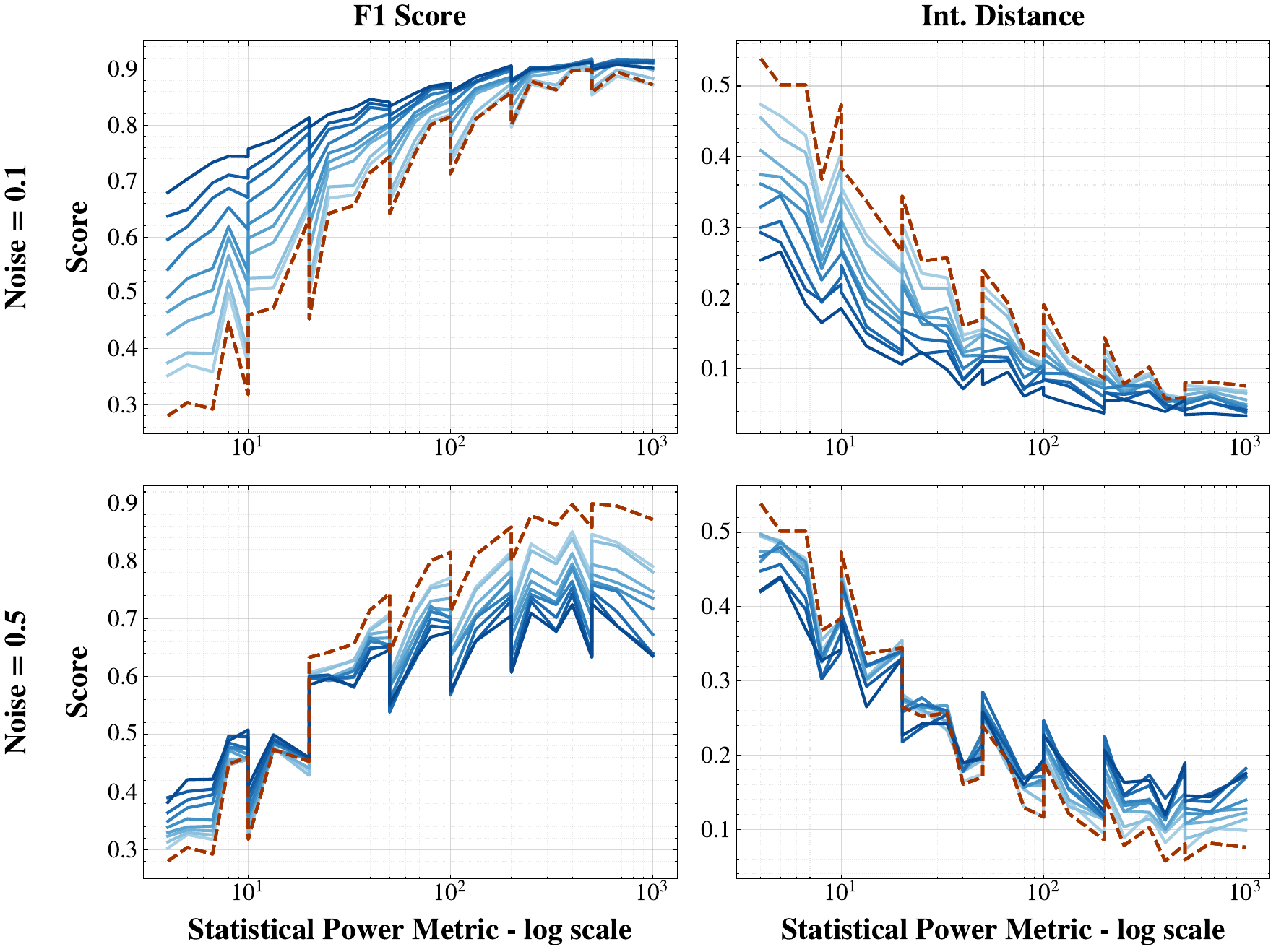}
    \caption{Noise extremes ($\mu{=}0.1$ top, $\mu{=}0.5$ bottom)
    on the synthetic sweep. F1 (left) and Intervention Distance (right)
    vs.\ $\Pi$; line shade encodes $\rho$. Blue: \method\ with noised
    $\mathcal{B}^+$; orange: LOAD.}
    \label{fig:noise}
    \vspace{-1.4em}
\end{figure}%

\section{Experimental Details}
The computational experiments were executed on a shared compute cluster utilizing ThinkSystem SD530 nodes, each equipped with dual 20-core Intel Xeon Gold 6230 processors. Resource allocation was maintained at 4 GB of memory per requested CPU core. Workloads were extensively parallelized across the cluster using SLURM job arrays \cite{Yooetal2003}. Finally, we note that synthetic evaluations on 100-node graphs at higher edge densities (ER4, ER5, and ER6) were excluded due to prohibitive time and memory constraints.

\clearpage
\bibliographystyle{IEEEtranN}
\bibliography{references}

\end{document}